
\documentclass{article}

\usepackage{microtype}
\usepackage{graphicx}
\usepackage{subfigure}
\usepackage{booktabs} 

\usepackage{hyperref}



\usepackage[accepted]{icml2024}

\usepackage{amsmath}
\usepackage{amssymb}
\usepackage{mathtools}
\usepackage{amsthm}

\usepackage{wrapfig}
\usepackage{bbm}
\usepackage{bbding}
\usepackage{stackengine}
\usepackage{fancyhdr}
\usepackage{pifont}
\usepackage{times}
\usepackage{amsmath}
\usepackage{amssymb}
\usepackage[shortlabels]{enumitem}
\usepackage{amsfonts}
\usepackage{amsmath}
\usepackage{amssymb}
\usepackage{mathtools}
\usepackage{amsthm}
\usepackage{enumitem}
\usepackage{hyperref}
\usepackage[capitalize,noabbrev]{cleveref}



\usepackage{hyperref}
\usepackage{url}

\usepackage[capitalize,noabbrev]{cleveref}

\theoremstyle{plain}
\newtheorem{theorem}{Theorem}[section]

\newtheorem{lemma}[theorem]{Lemma}

\theoremstyle{definition}
\newtheorem{definition}[theorem]{Definition}

\theoremstyle{remark}

\usepackage[textsize=tiny]{todonotes}

\icmltitlerunning{Model-based Reinforcement Learning for Parameterized Action Spaces}

\begin{document}

\twocolumn[
\icmltitle{Model-based Reinforcement Learning for Parameterized Action Spaces}



\icmlsetsymbol{equal}{*}

\begin{icmlauthorlist}
\icmlauthor{Renhao Zhang}{equal,yyy}
\icmlauthor{Haotian Fu}{equal,yyy}
\icmlauthor{Yilin Miao}{yyy}
\icmlauthor{George Konidaris}{yyy}
\end{icmlauthorlist}

\icmlaffiliation{yyy}{Department of Computer Science, Brown University}
\icmlcorrespondingauthor{Renhao Zhang}{rzhan160@cs.brown.edu}
\icmlcorrespondingauthor{Haotian Fu}{hfu7@cs.brown.edu}

\icmlkeywords{Machine Learning, ICML}

\vskip 0.3in
]



\printAffiliationsAndNotice{\icmlEqualContribution} 

\begin{abstract}
We propose a novel model-based reinforcement learning algorithm---Dynamics Learning and predictive control with Parameterized Actions (DLPA)---for Parameterized Action Markov Decision Processes (PAMDPs). The agent learns a parameterized-action-conditioned dynamics model and plans with a modified Model Predictive Path Integral control. We theoretically quantify the difference between the generated trajectory and the optimal trajectory during planning in terms of the value they achieved through the lens of Lipschitz Continuity. Our empirical results on several standard benchmarks show that our algorithm achieves superior sample efficiency and asymptotic performance than state-of-the-art PAMDP methods.\footnote{Code available at \small{\url{https://github.com/Valarzz/DLPA}.}}
\end{abstract}

\section{Introduction}
\label{Introduction}

Reinforcement learning (RL) has gained significant traction in recent years due to its proven capabilities in solving a wide range of decision-making problems across various domains, from game playing~\citep{DBLP:journals/nature/MnihKSRVBGRFOPB15} to robot control~\citep{DBLP:conf/icml/SchulmanLAJM15, DBLP:journals/corr/LillicrapHPHETS15}. One of the complexities inherent to some RL problems is a discrete-continuous hybrid action space. For instance, in a robot soccer game, at each time step the agent must choose a discrete action (move, dribble or shoot) as well as the continuous parameters corresponding to that chosen discrete action, e.g. the location to move/dribble to. In this setting---the Parameterized Action Markov Decision Processes (PAMDP)~\citep{masson2016reinforcement}---each discrete action is parameterized by some continuous parameters, and the agent must choose them together at every timestep. PAMDPs model many applications of interest like RTS Games~\citep{Xiong2018ParametrizedDQ} or Robot Soccer~\citep{DBLP:journals/corr/HausknechtS15a}.

Compared to just discrete or continuous action space, Reinforcement Learning with Parameterized action space allows the agent to perform more structural exploration and solve more complex tasks with a semantically more meaningful action space~\citep{DBLP:journals/corr/HausknechtS15a}. Recent papers provide various approaches for RL in the PAMDP setting~\citep{Bester2019MultiPassQF, Fu2019DeepMR, Fan2019HybridAR, Li2021HyARAD}. However, to the best of our knowledge, all previous methods are model-free. By contrast, in  continuous/discrete-only action spaces, deep model-based reinforcement learning has shown better performance than model-free approaches in many complex domains~\citep{DBLP:conf/iclr/HafnerLB020, DBLP:conf/iclr/HafnerL0B21, DBLP:journals/corr/abs-2301-04104, DBLP:conf/icml/HansenSW22}, in terms of both sample efficiency and asymptotic performance. We therefore seek to leverage the high sample efficiency of model-based RL in PAMDPs.

We propose a model-based RL framework tailored for PAMDPs: Dynamics Learning and predictive control with Parameterized Actions (DLPA). 
Our key innovations in terms of contextualizing model-based RL in PAMDPs are: 1. We propose three inference structures for the transition model considering the entangled parameterized action space. 2. We propose to update the transition models with H-step loss. 3. We propose to learn two separate reward predictors conditioned on the prediction for termination. 4. We propose an approach for PAMDP-specific MPPI. We empirically find that all these components significantly improves the algorithm's performance. We also provide theoretical analysis regarding DLPA's performance guarantee and sample complexity, which is the first theoretical performance bound for PAMDP RL algorithm, to the best of our knowledge. Our empirical results on 8 different PAMDP benchmarks show that DLPA achieves better or comparable asymptotic performance with significantly better sample efficiency than all the state-of-the-art PAMDP algorithms. We also find that DLPA succeeds in  tasks with extremely large parameterized action spaces where prior methods cannot succeed without learning a complex action embedding space, and converges much faster. The proposed method even outperforms a method~\citep{Li2021HyARAD} that has a customized action space compression algorithm as the original parameterized action space becomes larger. To the best of our knowledge, our work is the first method that successfully applies model-based RL to PAMDPs.

\section{Background}
\label{background}

\subsection{Parameterized Action Markov Decision Processes}
Markov Decision Processes (MDPs) form the foundational framework for many reinforcement learning problems, where an agent interacts with an environment to maximize some cumulative reward. Traditional MDPs are characterized by a tuple $\{S, A, T, R, \gamma \}$, where $S$ is a set of states, $A$ is a set of actions, $T$ denotes the state transition probability function, $R$ denotes the reward function, and $\gamma$ denotes the discount factor. Parameterized Action Markov Decision Processes (PAMDPs)~\citep{masson2016reinforcement} extend the traditional MDP framework by introducing the parameterized actions, denoted as a tuple $\{S, M, T, R, \gamma\}$, where $M$ (unlike traditional MDPs) is the parameterized action space that can be defined as $M = \{(k, z_{k})|z_{k} \in Z_{k} ~\text{for all}~ k \in \{1, \cdots, K\} \}$, where each discrete action $k$ is parameterized by a continuous parameter $z_k$, and $Z_k$ is the space of continuous parameter for discrete action $k$ and $K$ is the total number of different discrete actions. Thus, we have the dynamic transition function $T(s'|s,k,z_{k})$ and the reward function $R(r|s,k,z_{k})$.

\subsection{Model Predictive Control (MPC)}
In Deep Reinforcement Learning, Model-free methods usually learn a policy parameterized by neural networks that learns to maximize the cumulative returns $\mathbb{E}_{\tau}[\sum_{t}\gamma^{t}R(s,a)]$. In the model-based domain, since we assume we have access to the learned dynamics model, we can use Model Predictive Control~\citep{Garcia1989ModelPC} to plan and select the action at every time step instead of explicitly learn to approximate a policy function. Specifically, at time step $t$, the agent will first sample a set of action sequences with the length of horizon $H$ for states $s_t:s_{t+H}$. Then it will use the learned dynamics model to take in actions as well as the initial states and get the predicted reward for each time step. Then the cumulative reward for each action sequence will be computed and the action sequence with the highest estimated return will be selected to execute in the real environment.  Cross-Entropy Method (CEM)~\citep{Rubinstein1997OptimizationOC} is often used together with this planning procedure, which works by iteratively fitting the parameters of the sampling distributions over the actions.

\section{Related Work}
Several model-free RL methods have been proposed in the context of deep reinforcement learning for PAMDPs. PADDPG~\citep{DBLP:journals/corr/HausknechtS15a} builds upon DDPG~\citep{DBLP:journals/corr/LillicrapHPHETS15} by letting the actor output a concatenation of the discrete action and the continuous parameters for each discrete action. Similar ideas are proposed in HPPO~\citep{Fan2019HybridAR}, which is based on the framework of PPO~\citep{DBLP:journals/corr/SchulmanWDRK17}. P-DQN~\citep{Xiong2018ParametrizedDQ,Bester2019MultiPassQF} is another framework based on the actor-critic structure that maintains a policy network that outputs continuous parameters for each discrete action. This structure has the problem of computational efficiency since it computes  continuous parameters for each discrete action at every timestep. Hybrid MPO~\citep{Neunert2019ContinuousDiscreteRL} is based on MPO~\citep{DBLP:conf/iclr/AbdolmalekiSTMH18} and it considers a special case where the discrete part and the continuous part of the action space are independent, while in this paper we assume the two parts are entangled. HyAR~\citep{Li2021HyARAD} proposes to construct a latent embedding space to model the dependency between discrete actions and continuous parameters, achieving the best empirical performance among these methods. However, introducing another latent embedding space can be computationally  expensive and the error in compressing the original actions may be significant in complex tasks. While all these methods can be effective in some PAMDP problems, to the best of our knowledge, no work has successfully applied model-based RL to  PAMDPs, even though model-based approaches have achieved  high performance and excellent sample efficiency in MDPs. Recent papers also make an effort to construct such parameterized action spaces from the primitive continuous action spaces~\citep{DBLP:conf/icml/FuYTL023} leveraging the HiP-MDP assumption~\citep{DBLP:conf/ijcai/Doshi-VelezK16,DBLP:conf/nips/KillianDDK17,DBLP:conf/iclr/FuYGD023}.

Most existing (Deep) Model-based Reinforcement Learning methods can be classified into two categories in terms of how the learned model is used. The first category of methods learns the dynamics model and plans for credit assignment~\citep{DBLP:journals/corr/abs-1812-00568, DBLP:journals/corr/abs-1808-09105, DBLP:conf/nips/JannerFZL19, DBLP:conf/icml/HafnerLFVHLD19, DBLP:conf/iclr/LowreyRKTM19, DBLP:conf/iclr/KaiserBMOCCEFKL20, DBLP:conf/l4dc/BhardwajHFB20, DBLP:conf/nips/YuTYEZLFM20, DBLP:journals/nature/SchrittwieserAH20, DBLP:conf/icml/NguyenSPBE21}. A large number of algorithms in this category involves planning with random shooting~\citep{DBLP:conf/icra/NagabandiKFL18} or Cross-Entropy Method~\citep{Rubinstein1997OptimizationOC, DBLP:conf/nips/ChuaCML18}.~\citet{DBLP:conf/corl/OkadaT19} proposes a variational inference MPC method that introduces action ensembles with a Gaussian mixture model, considering planning with uncertainty-aware dynamics and continuous action space. However, our modification to the MPPI is to enable it to learn the optimal distribution for parameterized action space, which is composed of both discrete and continuous actions. The other way is to use the learned model to generate more data and explicitly train a policy based on that~\citep{DBLP:conf/iclr/PongGDL18, DBLP:conf/nips/HaS18, DBLP:conf/iclr/HafnerLB020, DBLP:conf/icml/SekarRDAHP20, DBLP:journals/corr/abs-2301-04104}, which is also known as Dyna~\citep{DBLP:conf/icml/Sutton90}-based methods. There are also algorithms combining model-free and model-based methods~\citep{DBLP:conf/icml/HansenSW22}. None of these methods apply to the parameterized action (PAMDP) settings. 
\begin{figure}[htbp]
    \centering
    \includegraphics[width=1\linewidth]{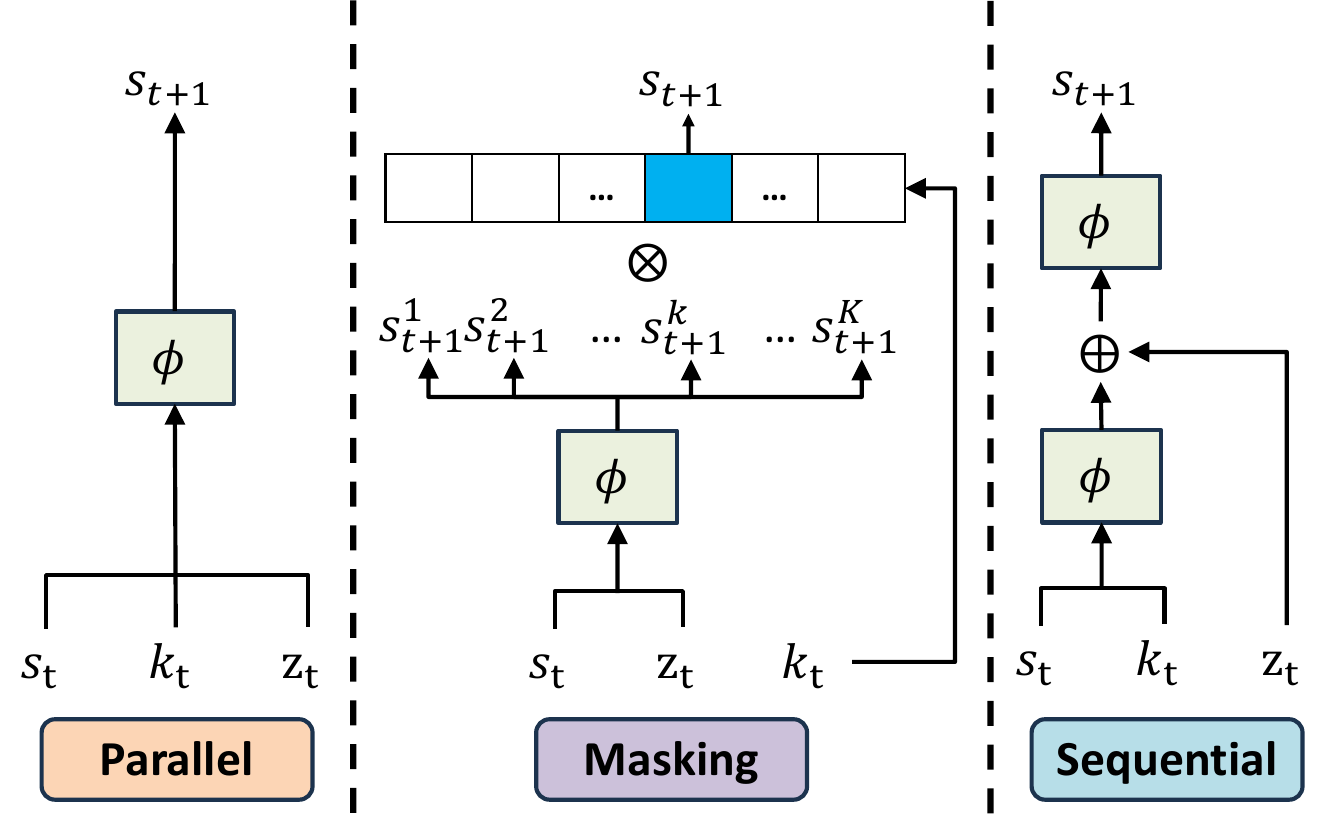}
    \caption{Three distinct inference architectures for the predictors. All the models are parameterized with $\phi$.}
    \label{fig:archi}
\end{figure}

\section{Dynamics Learning and Predictive Control with Parameterized Actions}
\label{sec:dyn}
\begin{figure*}[h]
    \centering
    \includegraphics[width=0.9\linewidth]{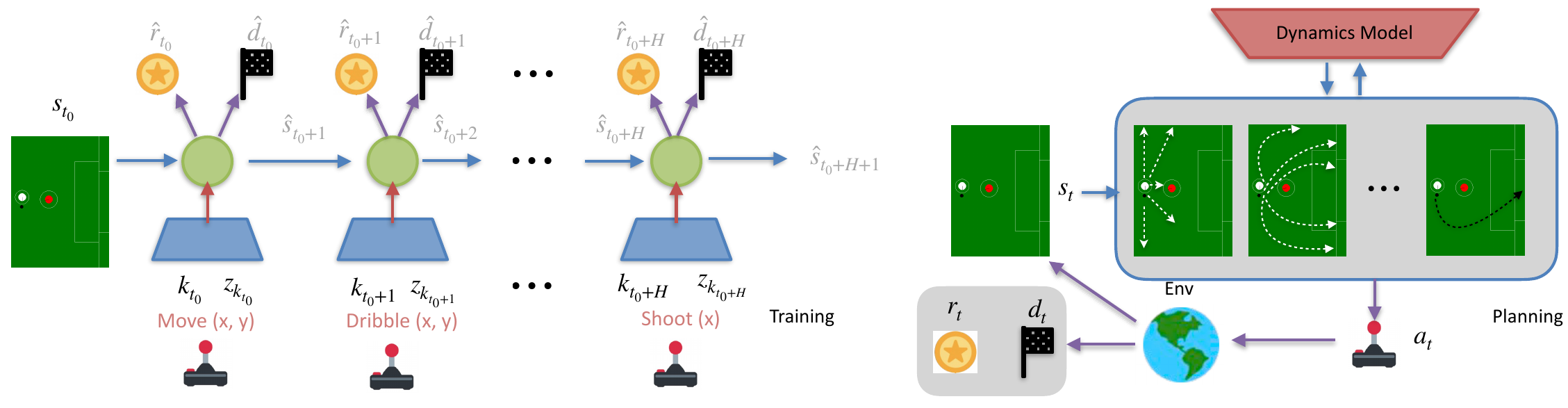}
    \caption{ Left: Inference of dynamics during training. {\bf Variables colored with default black are those we feed as input to the dynamics model. Variables colored with grey are those generated from the dynamics model.} Right: Planning and interacting with the environment. At each time step we execute only the first action from the sampled trajectory. White lines are example rollout trajectories from DLPA.The black line denotes the final selected rollout trajectory for one planning step.}
    \label{fig:alg}
\end{figure*}

\subsection{Dynamics Model with Parameterized Actions}
\label{sec:model}
To perform Model Predictive Control with Parameterized Actions, DLPA requires learning the following list of model components:
\begin{itemize}
  \item[] Transition predictor: $\hat{s}_{t+1} \sim T_{\phi}(\hat{s}_{t+1}|s_{t}, k_{t}, z_{k_{t}})$,
  \item[] Continue predictor: $\hat{c}_{t} \sim p_{\phi}(\hat{c}_t|s_{t+1})$,
  \item[] Reward predictor: $\hat{r}_{t} \sim R_{\phi}(\hat{r}_{t}|s_{t}, k_{t}, z_{k_{t}})$,
\end{itemize}
where we use $c_t$ to denote the episode continuation flag. Given a state $s_t$ observed at time step $t$, a discrete action $k_{t}$ and the corresponding continuous parameter $z_{k_{t}}$, the transition predictor and the reward predictor $T_{\phi}$ and $R_{\phi}$ predict the next state $\hat{s}_{t+1}$ and reward $\hat{r}_{t+1}$ respectively. The Continue predictor outputs a prediction $\hat{c}_{t}$ for whether the trajectory continues at time step $t+1$ given $s_{t+1}$.  All the components are implemented in the form of neural networks and we use $\phi$ to denote the combined network parameters. Specifically, the first three components are implemented with networks with stochastic outputs. i.e., we model the output distribution as a Gaussian and the outputs of the networks are mean and standard deviation of the distribution. We leverage reparameterization trick~\citep{DBLP:journals/corr/KingmaW13} to allow computing gradients for the sampled $\hat{s}, \hat{r}, \hat{c}$'s.

{\bf PAMDP-specific inference models.} In DLPA, we consider three distinct inference architectures for the predictors, as depicted in Figure~\ref{fig:archi}. Specifically, the first model directly takes in the concatenation of $s_{t}, k_{t}, z_{k_{t}}$ and infers the next state. The discrete component and the continuous component are treated in parallel. The second model first takes in $s_{t}$ and $z_{k_{t}}$ to infer all the $K$ possible separate predictions for each discrete action, from which the relevant prediction is chosen based on $k_{t}$ through masking. A similar framework is used in~\citet{Xiong2018ParametrizedDQ} as the policy. The third method treats the discrete component and continuous component sequentially by decoupling the prediction process. It first infers a latent embedding based on $s_{t}$ and $k_{t}$, then concatenates the latent embedding with $z_{t}$ and predicts $s_{t+1}$. Intuitively, the agent will be forced to predict a range of the outcome based on only the discrete action, and then predict the more precise outcome given the continuous parameters. Notably, the latter two methods are tailored to predict future outcomes by considering the intrinsic structure of parameterized actions. In the experimental section, we demonstrate that the third method exhibits the best overall performance.

{\bf H-step prediction loss.} We train the components listed in section~\ref{sec:model} through minimizing the loss below:
\begin{equation}
\label{eqa1}
  \begin{aligned}
  \mathcal{L}_{joint} = ~&\mathbb{E}_{\{s_t, k_t, z_{k_t}, r_t, s_{t+1}, c_t\}_{t_0:t_0+H}}\sum_{t_{0}}^{t_{0}+H}\beta^{t-t_0}\Big\{ \\&\lambda_1 \|T_{\phi}(\hat{s}_{t+1}|\hat{s}_{t}, k_{t}, z_{k_{t}}) - s_{t+1}\|_{2}^{2} + \\&\lambda_2 \|R_{\phi}(\hat{r}_{t+1}|\hat{s}_{t}, k_{t}, z_{k_{t}}) - r_{t+1}\|_{2}^{2} +
  \\&\lambda_3 \|p_{\phi}(\hat{c}_{t}|\hat{s}_{t+1}) - c_{t}\|_{2}^{2} \Big\},
  \end{aligned}
\end{equation}
where $\hat{s}_{t_0}=s_{t_0}$. We use $H$ to denote the planning horizon and $t_0$ to denote the start time step. $\beta$ denotes the hyperparameter we use to control the weight of the loss term. The weight of the sum of the loss will be lower if $t$ is closer to the end time step $t_0 + H$. $\lambda$ denotes the weight of each prediction loss term.

Specifically, at each training step, we first sample a batch of trajectories $\{s_t, k_t, z_{k_t}, r_t, s_{t+1}, c_t\}_{t_0:t_0+H}$ from a replay buffer. Then we do the inference procedures as shown in Figure~\ref{fig:alg} Left. We give the dynamics model the sampled $s_{t_0}, k_{t_0}, z_{k_{t_0}}$ at first, and we get the predictions of next state $\hat{s}_{t_0+1}$, reward $\hat{r}_{t_0}$ and continuation flag $\hat{c}_{t_0}$ for the first time step. Then, we iteratively let our dynamics model predict the transitions with the sampled parameterized actions and {\bf the predicted state from last time step}. At the end we take the weighted sum of all the prediction losses for state, reward and termination and update the model as described in Equation~\ref{eqa1}. And the gradients from the loss term for the last time step $t_{0}+H$ will be backpropagated all the way to the first inference at time step $t_{0}$. That is to say, what we give our model as input during training is $\{s_{t_0}, k_{t_0}, z_{k_{t_0}}, k_{t_0 + 1}, z_{k_{t_0 + 1}},\cdots, k_{t_0 + H}, z_{k_{t_0 + H}}\}$ which contains only the start state $s_{t_0}$ without the other ground truth states in the trajectory. And we use this information to infer all the other information contained in the sampled trajectory and calculate the loss. 
As we will plan into the future with the exact same length $H$ during planning, our choice allows gradients to flow back through time and assign credit more effectively than the alternative. 
We empirically find out that, by learning to infer several steps into the future instead of just the next step, the downstream planning tends to get better performance and thus benefits the whole training process.

 By focusing on multi-step ahead predictions, the dynamic model develops a better understanding of the long-term consequences of actions, including the nuanced effects of different parameterized actions. Intuitively, in PAMDPs, the impact of actions can be highly dependent on the specific parameters chosen, and {\bf the effects may only become apparent over several time steps}. i.e., choosing the same discrete action but different continuous parameters at current timestep may not result in much difference for the next state, but  the compounding error will be significant after a few more steps.

{\bf Separate reward predictors.} In practice, we find that learning two separate reward predictors can greatly improve Model-based RL algorithm's performance in the PAMDP context. Specifically, we learn one reward predictor for the case when the episode continuation flag $\hat{c}_t=1$ and another predictor when $\hat{c}_t=0$. The empirical performance comparison can be found in Table~\ref{tab:rew}.
We hypothesize that in PAMDP, the outcome of an action can be highly specific to the combination of its discrete and continuous components. i.e., with the same discrete action, only a small range of continuous parameters can lead to the terminal state. If we do not separate the prediction, the reward model is required first accurately predict the impact of very precise actions on the termination condition and also maintain a combined reward prediction for the two cases. Separate reward models allow for a more detailed understanding of how different action parameters influence outcomes in critical (terminal) versus normal (non-terminal) scenarios.

\subsection{MPC with Parameterized Actions}
\label{planning}
Now we introduce the planning part for the proposed algorithm. The planning algorithm is based on Model Predictive Path Integral~\citep{DBLP:journals/corr/WilliamsAT15}, adapted for the PAMDP setting ({\bf PAMDP-specifc MPPI}).  The main modification we make when applying MPPI to PAMDPs is that we keep a separate distribution over the continuous parameters for each discrete action and update them at each iteration instead of keeping just one independent distribution for the discrete actions and one independent distribution for the continuous parameters. In other words, we let the distribution of the continuous parameters condition on the chosen discrete action during the sampling process. This is an important change to make when using MPPI for PAMDPs as we wish to avoid discarding the established dependency between the continuous parameter and corresponding the discrete action in PAMDPs.

Specifically, we model the discrete action $k$ follows a categorical distribution:
\begin{equation}
\small
\begin{aligned}
    k\sim \text{Cat}(\theta^0_1,\theta^0_2, \cdots, \theta^0_K), \sum_{k=1}^{K}\theta^0_k = 1,~\theta^0_k \geq 0,
\end{aligned}
\end{equation}
where the probability for choosing each discrete action $k$ is given by $\theta^0_k$. For the continuous parameter $z_k$ corresponding to each discrete action $k$, we model it as a multivariate Gaussian:
\begin{equation}
\begin{aligned}
\small
    z_{k} \sim \mathcal{N}(\mu^0_k, (\sigma^0_k)^2I),~ \mu^0_k,\sigma^0_k\sim \mathbb{R}^{|z_{k}|}.
    \end{aligned}
\end{equation}
At the beginning of planning, we initialize a set of independent parameters $\mathcal{C}^0 = \{\theta^0_1, \cdots, \theta^0_K, \mu^0_1, \sigma^0_1,\cdots,\mu^0_K, \sigma^0_K\}_{t:t+H}$ for each discrete action and continuous parameter over a horizon with length $H$. Recall that $K$ is the total number of discrete actions. 
Note that next we will update these distribution parameters for $E$ iterations, so for each iteration $j$, we will have $\mathcal{C}^j = \{\theta^j_1,\cdots, \theta^j_K, \mu^j_1, \sigma^j_1,\cdots, \mu^j_K, \sigma^j_K\}_{t_0:t_0+H}$. 

Then, for each iteration $j$, we independently sample $N$ trajectories by independently sampling from the action distributions at every time step $t$ and forwarding to get the trajectory with length $H$ using the dynamics models $T_{\phi}, R_{\phi}, p_{\phi}$ as introduced in the last section. For a sampled trajectory $\tau = \{s_{t_0}, \hat{k}_{t_0}, \hat{z}_{k_{t_0}}, \hat{s}_{t_0+1}, \hat{r}_{t_0}, \hat{c}_{t_0}, \cdots, \hat{s}_{t_0+H}, \hat{k}_{t_0+H}, \hat{z}_{k_{t_0+H}},$ $\hat{s}_{t_0+1+H}, \hat{r}_{t_0+H}, \hat{c}_{t_0+H}\}$, we can calculate the cumulative return $\mathcal{J}_{\tau}$ with:
\begin{equation}
\small
\begin{aligned}
    \mathcal{J}_{\tau} = \mathbb{E}_{\tau}[\sum_{t=t_0}^{t_0+H}\gamma^{t}c_{t}R_{\phi}(\hat{s}_t, \hat{k}_t, \hat{z}_{k_{t}})], \text{where}~{\hat{s}_{t_0}=s_{t_0}}.
    \end{aligned}
\end{equation}
Let $\Gamma_{k, \tau}=\{\hat{k}_{t}\}_{t=t_0:t_0+H}$ denote the discrete action sequences and $\Gamma_{z, \tau}=\{\hat{z}_{k_{t}}\}_{t=t_0:t_0+H}$ denote the continuous parameter sequences within each trajectory $\tau$.
Then based on the cumulative return of each trajectory, we select the trajectories with top-$n$ cumulative returns and update the set of distribution parameters $\mathcal{C}^j$ via:
\begin{equation}
\label{theta}
\small
    \begin{aligned}
        \theta^j_{k} = (1-\alpha)\frac{\sum_{i=1}^n e^{\xi\mathcal{J}_{\tau_i}}\Gamma_{k,\tau_i}}{\sum_{i=1}^n e^{\xi \mathcal{J}_{\tau_i}}} + \alpha\theta^{j-1}_k,
    \end{aligned}
\end{equation}
\begin{equation}
\label{mu}
\small
\begin{aligned}
    \mu^j_{k} = (1-\alpha)\frac{\sum_{i=1}^n e^{\xi\mathcal{J}_{\tau_i}}\Gamma_{z,\tau_i}\mathbbm{1}\{\Gamma_{k,\tau_i}==k\}}{\sum_{i=1}^n e^{\xi \mathcal{J}_{\tau_i}}}+ \rho_k\mu^{j-1}_k,
\end{aligned}
\end{equation}
\begin{equation}
\label{sigma}
\small
\begin{aligned}
    &\sigma^j_{k} =\\&(1-\alpha)\sqrt{\frac{\sum_{i=1}^n e^{\xi\mathcal{J}_{\tau_i}}(\Gamma_{z,\tau_i}-\mu^j_{k})^2\mathbbm{1}\{\Gamma_{k,\tau_i}==k\}}{\sum_{i=1}^n e^{\xi \mathcal{J}_{\tau_i}}}}+ \rho_k\sigma^{j-1}_k,
    \end{aligned}
\end{equation}
where $\rho_k = 1-\mathbbm{1}\{\sum_{i=1}^n\mathbbm{1}\{\Gamma_{k,\tau_i}==k\}>0\}(1-\alpha)$, and we use $\xi$ to denote the temperature that controls the trajectory-return weight. Intuitively, for each iteration $j$, we update our sampling distribution over the discrete and continuous actions weighted by the expected return as the returns come from these actions by forwarding our learned dynamics model. The discrete actions and continuous parameters that achieve higher cumulative return will be more likely to be chosen again during the next iteration. For updating the continuous parameters, we add a indicator function as we only want to update the corresponding distributions for those continuous parameters corresponding to the selected $k$. The updated distribution parameters at each iteration is calculated in the form of a weighted sum of the new value derived from the returns and the old value used at the last time step. We use a hyperparameter $\alpha$ to control the weights.

After $E$ iterations of updating the distribution parameters, we sample a final trajectory from the updated distribution and execute only the first parameterized action, which is also known as receding-horizon MPC similar to previous work~\citep{DBLP:conf/icml/HansenSW22}. Then we move on to the next time step and do all the planning procedures again. 

The overall algorithm is described in Algorithm~\ref{alg:alg}. At each environment time step, the agent executes $E$ steps of forward planning while updating the distribution parameters over discrete actions and continuous parameters. Then it uses the first action sampled from the final updated distribution to interact with the environment and add the new transitions into the replay buffer $\mathcal{B}$. Then if it is in training phase, the agent samples a batch of trajectories from the replay buffer, using the steps introduced in Section~\ref{sec:model} to compute the loss and  update the dynamics models.

\section{Analysis}
In this section, we provide some theoretical performance guarantees for DLPA. We quantify the estimation error between the cumulative return of the trajectory generated from DLPA and the optimal trajectory through the lens of Lipschitz continuity. All the proofs can be found in the appendix.
\begin{definition}
    A PAMDP is $( L_T^S, L_T^K, L_T^Z)$-Lipschitz continuous if, for all $s \in S$, $k\in \{1, \cdots, K\}$, and $z \in Z$ where $z \sim \omega(\cdot|k)$\footnote{We define the distribution over $z$ given $k$ and use it in all the following theoretical analysis to be consistent with the proposed method described in Section~\ref{planning}: for each $k$, we keep a distribution over $z$ and sample from it to roll out the trajectory.}:
\begin{equation*}
\small
\begin{aligned}
W(T(\cdot|s_1, k, \omega)), T(\cdot|s_2, k, \omega))) \leq L^{S}_{T} d_{S}(s_1, s_2)
\end{aligned}
\end{equation*}
\begin{equation*}
\small
\begin{aligned}
W(T(\cdot|s, k_1, \omega)), T(\cdot|s, k_2, \omega))) \leq L^{K}_{T} d_{K}(k_1, k_2)
\end{aligned}
\end{equation*}
\begin{equation*}
\small
\begin{aligned}
 W(T(\cdot|s, k, \omega_{1}), T(\cdot|s, k, \omega_{2})) \leq L^{Z}_{T} d_{Z}(\omega_{1}, \omega_{2}),
 \end{aligned}
\end{equation*}
\end{definition}
where $W$ denotes the Wasserstein Metric and $\omega$ denotes the distribution over the continuous parameters given a discrete action type $k$. $d_S, d_K, d_Z$ are the distance metrics defined on space $S, K, Z$. Similarly, we can define a $(L_R^S, L_R^K, L_R^Z)$-Lipschitz PAMDP. Note that as $k_1, k_2, \cdots$ are discrete variables, we use Kronecker delta function as the distance metric: $d_K(k_j, k_i)=1, \forall i \neq j$. The definition can be seen as an  extension of the Lipschitz continuity assumption for regular MDP~\citep{DBLP:conf/isaim/RachelsonL10,DBLP:journals/ml/PirottaRB15,DBLP:conf/icml/AsadiML18,DBLP:conf/icml/GeladaKBNB19} to PAMDP. Furthermore, in this paper, we follow the Lipschitz model class assumption~\citep{DBLP:conf/icml/AsadiML18}.

Assuming the error of the learned transition model $\hat{T}$ and reward model $\hat{R}$ are bounded by $\epsilon_T$ and $\epsilon_R$ respectively: $W(T(s, k, \omega),\hat{T}(s, k ,\omega))\leq \epsilon_T, |R(s, k, \omega) - \hat{R}(s, k ,\omega)|\leq \epsilon_R$, for all $s, k ,\omega$. We can derive the following theorem:
\begin{theorem}
\label{mainthm}
    For a $(L_R^S, L_R^K, L_R^Z, L_T^S, L_T^K, L_T^Z)$-Lipschitz PAMDP and the learned DLPA $\epsilon_{T}$-accurate transition model $\hat{T}$ and $\epsilon_{R}$-accurate reward model $\hat{T}$, let $L_{\Bar{T}}^S = \min \{L_T^S, L_{\hat{T}}^S\}$, $L_{\Bar{T}}^K = \min \{L_T^K, L_{\hat{T}}^K\}$, $L_{\Bar{T}}^Z = \min \{L_T^Z, L_{\hat{T}}^Z\}$ and similarly define $L_{\Bar{R}}^{S}$, $L_{\Bar{R}}^{K}$, $L_{\Bar{R}}^{Z}$. If $L_{\Bar{T}}^S < 1$, then the regret of the rollout trajectory $\hat{\tau} = \{\hat{s_1}, \hat{k_1}, \hat{\omega_1}(\cdot|\hat{k_1}), \hat{s_2}, \hat{k_2}, \hat{\omega_2}(\cdot|\hat{k_2}),\cdots\}$ from DLPA is bounded by:
\begin{equation}
\small
\begin{aligned}
&|\mathcal{J}_{\tau^*} - \mathcal{J}_{\hat{\tau}}| := \sum_{t=1}^{H}\gamma^{t-1}|R(s_t, k_t, \omega_t(\cdot|k_t)) - \hat{R}(\hat{s_t}, \hat{k_t}, \hat{\omega_t}(\cdot|\hat{k_t}))| \\
    &\leq ~\mathcal{O}\Big((L_{\Bar{R}}^K+L^S_{\Bar{R}}L^K_{\Bar{T}})m\\&+H(\epsilon_R+ L_{\Bar{R}}^S\epsilon_T + (L_{\Bar{R}}^Z+L_{\Bar{R}}^SL_{\Bar{T}}^Z)(\frac{m}{H}\Delta_{k, \hat{k}} +\Delta_{\omega, \hat{\omega}}))\Big),
\end{aligned}
\end{equation}
\end{theorem}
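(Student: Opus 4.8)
\emph{Proof sketch.} The plan is to bound the per-step reward discrepancy $|R(s_t,k_t,\omega_t(\cdot|k_t)) - \hat R(\hat s_t,\hat k_t,\hat\omega_t(\cdot|\hat k_t))|$ for each $t$ and then sum against $\gamma^{t-1}$. First I would peel off the reward-model error with the triangle inequality, $|R(\hat s_t,\hat k_t,\hat\omega_t) - \hat R(\hat s_t,\hat k_t,\hat\omega_t)| \le \epsilon_R$, and then walk from the arguments $(\hat s_t,\hat k_t,\hat\omega_t)$ back to $(s_t,k_t,\omega_t)$ one coordinate at a time, applying the three Lipschitz inequalities in the definition and, for each coordinate, using whichever of the true or learned model yields the smaller Lipschitz constant (this is exactly why the bound is phrased in terms of $L_{\bar R}^\bullet$ and $L_{\bar T}^\bullet$). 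Working throughout with the laws of the (random) states and the Kantorovich--Rubinstein form of the Wasserstein metric, so that integrating an $L$-Lipschitz function against two measures changes the value by at most $L$ times their $W$-distance, this yields
\begin{equation*}
\small
|R(s_t,k_t,\omega_t) - \hat R(\hat s_t,\hat k_t,\hat\omega_t)| \;\le\; \epsilon_R + L_{\bar R}^S \delta_t + L_{\bar R}^K d_K(k_t,\hat k_t) + L_{\bar R}^Z d_Z(\omega_t,\hat\omega_t),
\end{equation*}
where $\delta_t := W\big(\mathrm{law}(s_t),\mathrm{law}(\hat s_t)\big)$.

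The second step is to control $\delta_t$. Writing $s_{t+1}\sim T(\cdot|s_t,k_t,\omega_t)$ and $\hat s_{t+1}\sim \hat T(\cdot|\hat s_t,\hat k_t,\hat\omega_t)$ and applying the same coordinate-by-coordinate decomposition to the transition kernels --- together with the standard Lipschitz-model-class fact that pushing two state distributions through a common $L_{\bar T}^S$-Lipschitz kernel scales their $W$-distance by at most $L_{\bar T}^S$ --- gives the recursion
\begin{equation*}
\small
\delta_{t+1} \;\le\; L_{\bar T}^S \delta_t + \epsilon_T + L_{\bar T}^K d_K(k_t,\hat k_t) + L_{\bar T}^Z d_Z(\omega_t,\hat\omega_t), \qquad \delta_1 = 0,
\end{equation*}
using $\hat s_1 = s_1$. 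Since $L_{\bar T}^S < 1$, unrolling this recursion and summing the resulting geometric series bounds $\sum_{t=1}^H \delta_t$ by $\tfrac{1}{1-L_{\bar T}^S}$ times the total of the ``driving'' terms $\epsilon_T$, $L_{\bar T}^K d_K(k_t,\hat k_t)$ and $L_{\bar T}^Z d_Z(\omega_t,\hat\omega_t)$ accumulated over $t=1,\dots,H$.

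Finally I would assemble the pieces: substitute the bound on $\sum_t \delta_t$ into the summed per-step reward bound and use $\gamma\le 1$. The discrete-action distances contribute $\sum_t d_K(k_t,\hat k_t) = m$, the number of steps on which DLPA's discrete action differs from the optimal one (Kronecker metric on $K$); the model errors contribute $H\epsilon_R$ and, through the $L_{\bar R}^S$-weighted state term, $\tfrac{L_{\bar R}^S}{1-L_{\bar T}^S} H\epsilon_T$ plus $\tfrac{L_{\bar R}^S}{1-L_{\bar T}^S} L_{\bar T}^K m$; and the continuous-parameter distances split into two regimes --- at most $\Delta_{k,\hat k}$ on the $m$ mismatched steps and at most $\Delta_{\omega,\hat\omega}$ on the remaining $H-m$ steps --- so $\sum_t d_Z(\omega_t,\hat\omega_t) \le m\Delta_{k,\hat k} + H\Delta_{\omega,\hat\omega} = H\big(\tfrac{m}{H}\Delta_{k,\hat k} + \Delta_{\omega,\hat\omega}\big)$. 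Collecting coefficients and absorbing $\tfrac{1}{1-L_{\bar T}^S}$ and numerical constants into $\mathcal O(\cdot)$ reproduces the claimed bound. I expect the main obstacle to be the bookkeeping in this last step --- correctly propagating the $\tfrac{1}{1-L_{\bar T}^S}$ factor from the state recursion through the $L_{\bar R}^S$ term and keeping the two regimes for $d_Z$ straight --- rather than any individual inequality; a secondary subtlety is that, because the rollout is stochastic and the actions are chosen adaptively from $\hat s_t$, one must commit to reasoning about the laws of the states (so that the Wasserstein contraction lemma applies) and treat $m$, $\Delta_{k,\hat k}$, $\Delta_{\omega,\hat\omega}$ as the data-dependent quantities measuring how far DLPA's planned action sequence lands from the optimal one.
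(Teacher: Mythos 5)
Your proposal is correct and follows essentially the same route as the paper: a triangle-inequality decomposition of the per-step reward gap into $\epsilon_R$, a $d_K$ term, a $d_Z$ term, and an $L^S_{\bar R}$-weighted state-distribution error, with the latter controlled by exactly the Wasserstein contraction recursion $\delta_{t+1}\le L^S_{\bar T}\delta_t+\epsilon_T+L^K_{\bar T}d_K+L^Z_{\bar T}d_Z$ that the paper isolates as its compounding-error theorem (Theorem~\ref{thm1app}), then unrolled geometrically and summed, with the $\min$-Lipschitz constants obtained by running the argument with both the true and learned models as the intermediate point. Your explicit two-regime accounting for $\sum_t d_Z$ (yielding $m\Delta_{k,\hat k}+H\Delta_{\omega,\hat\omega}$) is in fact slightly more careful than the paper's per-step bound, but it lands on the same final expression.
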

where $m = \sum_{t=1}^H\mathbbm{1}(k_t\neq\hat{k_t}), \Delta_{\omega,\hat{\omega}} = W(\omega(\cdot|k), \hat{\omega}(\cdot|k))$, $\Delta_{k,\hat{k}} = W(\omega(\cdot|k), \omega(\cdot|\hat{k}))$, $N$ denotes the number of samples and $H$ is the planning horizon.

Imagine you are navigating through a forest (the environment) to find the best path to a treasure (optimal trajectory). You have a map (DLPA's model) that predicts paths based on your current location and chosen direction. However, this map is not perfect—it sometimes inaccurately predicts where a path might lead (estimation errors). Theorem~\ref{mainthm} essentially tells us that even with an imperfect map, if we know how "sensitive" the forest's paths are to changes in direction (Lipschitz continuity), and we understand the inaccuracies of our map (model errors), we can still confidently say that the path we choose won't be significantly worse than the best possible path, within a calculable margin.

Theorem~\ref{mainthm} quantifies how the following categories of estimation error will affect the cumulative return difference between the rollout trajectories of DLPA and the optimal trajectories: 1. The estimation error $m$ for the discrete action $k$. 2. The estimation error $\Delta_{k, \hat{k}} +\Delta_{\omega, \hat{\omega}}$ for the distribution $\omega$ over the continuous parameters. 3. The transition and reward model estimation error $\epsilon_T,~\epsilon_R$. It also shows how the smoothness of the transition function and reward function will affect DLPA's performance. We also provide a bound for the multi-step prediction error (compounding error) of DLPA in Appendix Theorem~\ref{thm1app}. 

The following lemma further shows how the estimation error $\frac{m}{H}\Delta_{k, \hat{k}} +\Delta_{\omega, \hat{\omega}}$ for the continuous parameters changes with respect to the number of samples and the dimentionality of the space of continuous parameters.
\begin{lemma}
    Let $|Z|$ denote the cardinality of the continuous parameters space and $N$ be the number of samples. Then, with probability at least $1-\delta$:
    \begin{equation}
    \small
        \begin{aligned}
            \frac{m}{H}\Delta_{k, \hat{k}} +\Delta_{\omega, \hat{\omega}} \leq \frac{2m}{H}\sqrt{|Z|} + \frac{2}{N}\ln{\frac{2|Z|}{\delta}}.
        \end{aligned}
    \end{equation}
\end{lemma}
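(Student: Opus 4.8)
The plan is to bound the two summands $\frac{m}{H}\Delta_{k,\hat k}$ and $\Delta_{\omega,\hat\omega}$ separately and then add the bounds, since the first is a purely geometric quantity (a Wasserstein distance between two \emph{true} conditional parameter distributions) whereas the second is a statistical estimation error depending on the $N$ sampled rollouts. For the first term I would argue that $\Delta_{k,\hat k}=W(\omega(\cdot|k),\omega(\cdot|\hat k))$ is a $1$-Wasserstein distance between two measures both supported on the continuous-parameter space $Z$, and is therefore at most the diameter of that support; under the standard normalization $Z\subseteq[-1,1]^{|Z|}$ (reading $|Z|$ as the dimension of the parameter space), the Euclidean diameter is $2\sqrt{|Z|}$ --- equivalently, couple both measures to a common point mass and apply the triangle inequality, each leg contributing at most $\sqrt{|Z|}$. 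This gives the deterministic bound $\frac{m}{H}\Delta_{k,\hat k}\le \frac{2m}{H}\sqrt{|Z|}$, exactly the first term on the right-hand side.

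For the second term I would view $\hat\omega(\cdot|k)$ as the distribution reconstructed from the $N$ sampled continuous parameters that the MPPI updates in Eqs.~\eqref{mu}--\eqref{sigma} use to fit $\mu_k,\sigma_k$, and control $W(\omega(\cdot|k),\hat\omega(\cdot|k))$ one coordinate at a time: by subadditivity of $W_1$ under the coordinatewise (marginal) decomposition it suffices to bound, for each of the $|Z|$ coordinates $j$, the one-dimensional discrepancy $W(\omega_j,\hat\omega_j)=\int|F_{\omega_j}-F_{\hat\omega_j}|$, which is in turn governed by the deviation of the empirical moments (or empirical CDF) from their population values. I would then apply a Bernstein/Chernoff-type tail to each coordinate in the regime where the linear term dominates, producing a per-coordinate bound of the form $P(W(\omega_j,\hat\omega_j)>t)\le 2e^{-Nt/2}$, and union-bound over the $|Z|$ coordinates to get $P(\Delta_{\omega,\hat\omega}>t)\le 2|Z|\,e^{-Nt/2}$; setting this equal to $\delta$ and solving for $t$ gives $\Delta_{\omega,\hat\omega}\le \frac{2}{N}\ln\frac{2|Z|}{\delta}$ with probability at least $1-\delta$. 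Adding the two bounds proves the lemma.

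The hard part will be the concentration step: one must choose the concentration inequality and the statistic it is applied to so that the per-coordinate tail has the exponential shape $e^{-\Theta(Nt)}$ rather than the subgaussian shape $e^{-\Theta(Nt^2)}$, and be deliberately loose with constants so that the union bound and the inversion reproduce precisely the stated $\frac{2}{N}\ln\frac{2|Z|}{\delta}$; a secondary subtlety is making the coordinatewise decomposition of the Wasserstein distance rigorous, which is immediate for product measures but otherwise needs an explicit coupling.
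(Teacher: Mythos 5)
Your bound on the first term is essentially the paper's: the paper also reduces $\Delta_{k,\hat k}$ to the distance between the means of two Gaussians supported (after normalization) in $[-1,1]^{|Z|}$ and bounds it by the diameter $2\sqrt{|Z|}$; your coupling-to-a-common-point-mass argument reaches the same place. The divergence, and the genuine gap, is in the second term. You propose to bound $\Delta_{\omega,\hat\omega}$ as a one-dimensional $W_1$ discrepancy per coordinate, $\int|F_{\omega_j}-F_{\hat\omega_j}|$, and then obtain a tail of the form $e^{-\Theta(Nt)}$ so that inversion yields the $\frac{2}{N}\ln\frac{2|Z|}{\delta}$ rate. You correctly identify this as the hard part, but as stated it would fail: for the empirical-CDF (or empirical-moment) deviation there is no generic linear-exponent tail; Hoeffding/DKW-type bounds give $e^{-\Theta(Nt^2)}$, which after inversion produces a $\sqrt{\ln(1/\delta)/N}$ rate, not $\ln(1/\delta)/N$. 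A Bernstein bound only recovers the linear regime when the variance proxy is itself $O(t)$, which you do not establish and which is not true here in general.

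The paper gets the $1/N$ rate by a different mechanism that sidesteps this issue entirely. It measures $\Delta_{\omega,\hat\omega}$ via the closed-form expression for the Wasserstein-2 distance between the two fitted Gaussians, $\|\mu_k-\hat\mu_k\|_2^2 + \mathrm{Tr}\bigl(\Sigma+\hat\Sigma-2(\Sigma^{1/2}\hat\Sigma\Sigma^{1/2})^{1/2}\bigr)$, drops the covariance term, and observes that the remaining statistic is the \emph{squared} deviation of a sample mean. Plain Hoeffding gives $|\mu_{k,i}-\hat\mu_{k,i}|\le (z_i^{\max}-z_i^{\min})\sqrt{\ln(2/\delta)/(2N)}$ per coordinate, and squaring this subgaussian bound is exactly what turns the $1/\sqrt{N}$ rate into the $\frac{(z^{\max}-z^{\min})^2}{2N}\ln\frac{2}{\delta}$ form; a union bound over the $|Z|$ coordinates then produces the stated $\frac{2}{N}\ln\frac{2|Z|}{\delta}$. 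So the missing idea in your plan is not a sharper concentration inequality but the choice of statistic: work with the squared mean difference appearing in the Gaussian $W_2$ formula (consistent with the algorithm's Gaussian parameterization of $\omega$ in Eqs.~\eqref{mu}--\eqref{sigma}) rather than with the integrated CDF difference. If you retain the $W_1$/CDF route, you would need either additional assumptions or you would land on a $1/\sqrt{N}$ bound that does not match the lemma.
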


\section{Experiments}
We evaluated the performance of DLPA on eight standard PAMDP benchmarks, including Platform and Goal~\citep{masson2016reinforcement}, Catch Point~\citep{Fan2019HybridAR}, Hard Goal and four versions of Hard Move. {\bf Note that these 8 benchmarks are exactly the same environments tested in~\citet{Li2021HyARAD}}, which introduced the algorithm (HyAR) that reaches state-of-the-art performance in these environments. We have provided a short description for each environment in Appendix~\ref{envdes}. We also evaluated our method in Half-Field-Offense (HFO)~\citep{DBLP:journals/corr/HausknechtS15a}, which is a complex domain with longer task horizon. The results can be found in Appendix~\ref{app:hfo}.

{\bf Implementation details.} We parameterize all models using MLPs with stochastic outputs. The planning Horizon for all tasks is chosen between $\{5, 8, 10\}$. During planning, we select the trajectories with top-$\{100, 400\}$ cumulative returns, and we run 6 planning iterations. We provide more details about the hyperparameters in the appendix.
\begin{figure}
    \centering
    \includegraphics[width=0.35\textwidth]{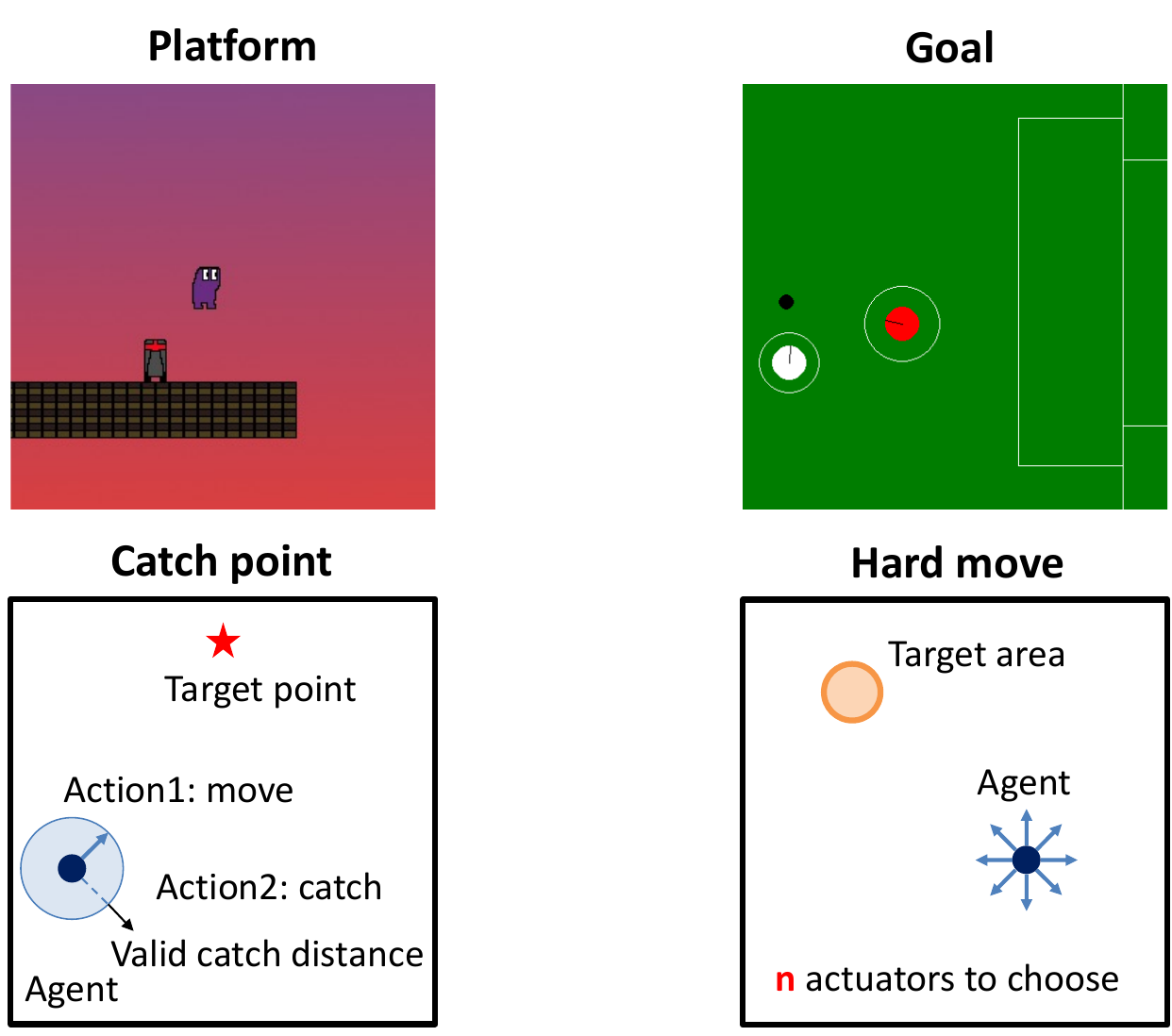}
    \caption{Visualization of the tested environments.}
    \label{fig:env}
\end{figure}
{\bf Baselines.} We evaluate DLPA against 5 different standard PAMDP algorithms across all the 8 tasks. HyAR~\citep{Li2021HyARAD} and P-DQN~\citep{Xiong2018ParametrizedDQ, Bester2019MultiPassQF} are state-of-the-art RL algorithms for PAMDPs. HyAR learns an embedding of the parameterized action space which has been shown to be especially helpful when the dimensionality of the parameterized action space is large. P-DQN learns an actor network for every discrete action type. PADDPG~\citep{DBLP:journals/corr/HausknechtS15a} and HPPO~\citep{Fan2019HybridAR} share similar ideas that an actor is learned to output the concatenation of discrete action and continuous parameter together. Following~\citet{Li2021HyARAD}, we replace DDPG with TD3~\citep{DBLP:conf/icml/FujimotoHM18} in PADDPG and PDQN to make the comparison fair and rename PADDPG as PATD3.

\begin{figure*}[h]
    \centering
    \includegraphics[width=0.85\linewidth]{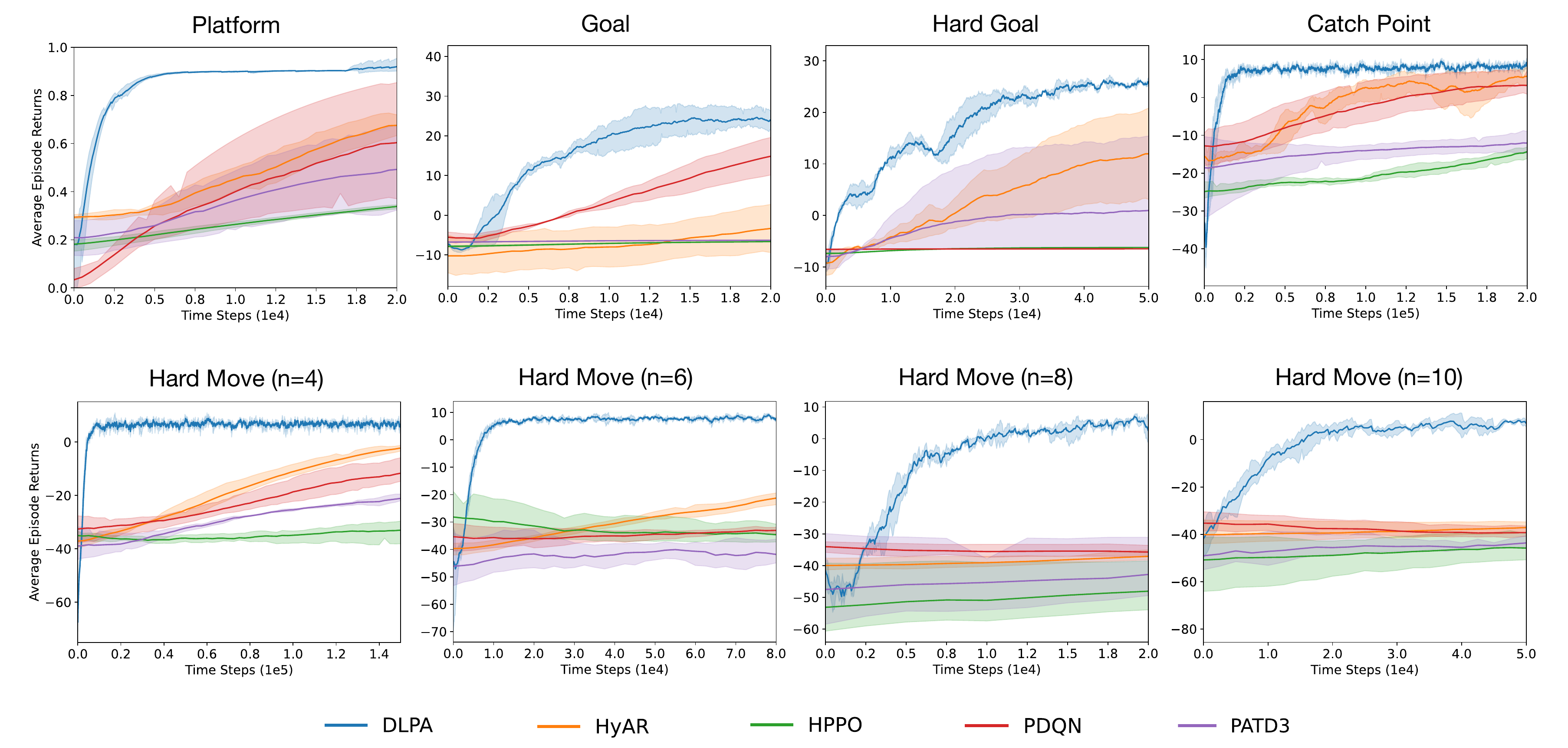}
    \caption{Comparison of different algorithms across the 8 PAMDP benchmarks. Our algorithm DLPA significantly outperforms state-of-the-art PAMDP algorithms in terms of sample efficiency. Note that HyAR has an additional 20000 environment steps pretraining for the action encoder which we do not include in the plot.}
    \label{fig:res1}
\end{figure*}

\subsection{Results} 
We show the evaluation results in Figure~\ref{fig:res1} and Table~\ref{aympto} (for asymptotic performance). A full timescale version of this plot can be found in Appendix~\ref{app:fullplot}. We find that DLPA achieves significantly higher sample efficiency with better or comparable asymptotic performance across all the 8 different tasks. Among all the model-free RL algorithms, HyAR achieves the overall best performance among all the other baselines, which is consistent with the results shown in their original paper. DLPA on average achieves {\bf $30\times$ higher sample efficiency} compared to the best model-free RL method in each scenario. In all the 8 scenarios except Platform and Goal, DLPA reaches a better asymptotic performance, while in those two domains, the final performance is still close to HyAR. In Hard Move ($n\geq6$), it has been shown in HyAR's original paper that, no regular PAMDP algorithms can learn a meaningful policy without learning a latent action embedding space. This happens because the action space is too large (i.e., $2^{n}$). {\bf However, we find that our algorithm DLPA---without learning such embedding space---can achieve even better performance just by sampling from the original large action space.} The table shows that, as the action space becomes larger (from 4 to 10), the gap between DLPA and HyAR also increases. HyAR's learned action embeddings are indeed useful in these cases, but it also sacrifices computational efficiency by making the algorithm much more sophisticated. 
\begin{table*}[htb]
\scriptsize
\centering
\begin{tabular}{c|c|c|c|c|c}
\toprule
\centering
  & \textbf{DLPA} & \textbf{HyAR} & \textbf{HPPO} & \textbf{PDQN} & \textbf{PATD3}\\\midrule 
 \textit{Platform} & $0.92\pm 0.05$ & $\mathbf{0.98\pm0.08}$ & $\ 0.82\pm\ 0.02$ & $0.91\pm0.07$& $0.92\pm0.09$\\
 \textit{Goal} & $28.75\pm6.91$ & $\mathbf{34.23\pm3.71}$ & $\ -6.17\pm0.06$ & $33.13\pm5.68$& $-2.25\pm8.11$\\
 \textit{Hard Goal} & $\mathbf{28.38\pm2.88}$ & $\ 26.41\pm3.59$ & $\ -6.16\pm0.06$ & $1.04\pm10.82$& $2.60\pm11.12$\\
 \textit{Catch Point} & $\mathbf{7.56\pm4.86}$ & $\ 5.20\pm4.18$ & $4.44\pm3.25$ & $6.64\pm2.52$ & $0.56\pm10.40$\\
 \textit{Hard Move (4)} & $\mathbf{6.29\pm5.74}$ & $\ 6.09\pm\ 1.67$ & $-31.20\pm5.58$ & $4.29\pm4.86$ & $-10.67\pm3.57$\\
 \textit{Hard Move (6)} & $\mathbf{8.48\pm5.45}$ & $6.33\pm2.12$ & $-32.21\pm6.54$ & $-15.62\pm8.65$ & $-35.50\pm25.43$\\
 \textit{Hard Move (8)} & $\mathbf{7.80\pm6.27}$ & $-0.88\pm3.83$ & $\ -37.11\pm10.10$ & $-37.90\pm4.07$ & $-30.56\pm12.21$\\
 \textit{Hard Move (10)} & $\mathbf{6.35\pm9.97}$ & $-7.05\pm3.74$ & $-39.18\pm8.76$ & $-39.68\pm5.93$& $-43.17\pm15.98$ \\ \bottomrule
\end{tabular}
\caption{Comparison of different algorithms on all the eight benchmarks at the end of training (asymptotic performance). We report the mean and standard deviation of the last ten steps before the end of training. Value in bold indicates the best result for each task.}
\label{aympto}
\end{table*}

\subsection{Ablation Study}
\label{section:ablationsection}
\begin{figure*}[h]
    \centering
    \includegraphics[width=0.99\linewidth]{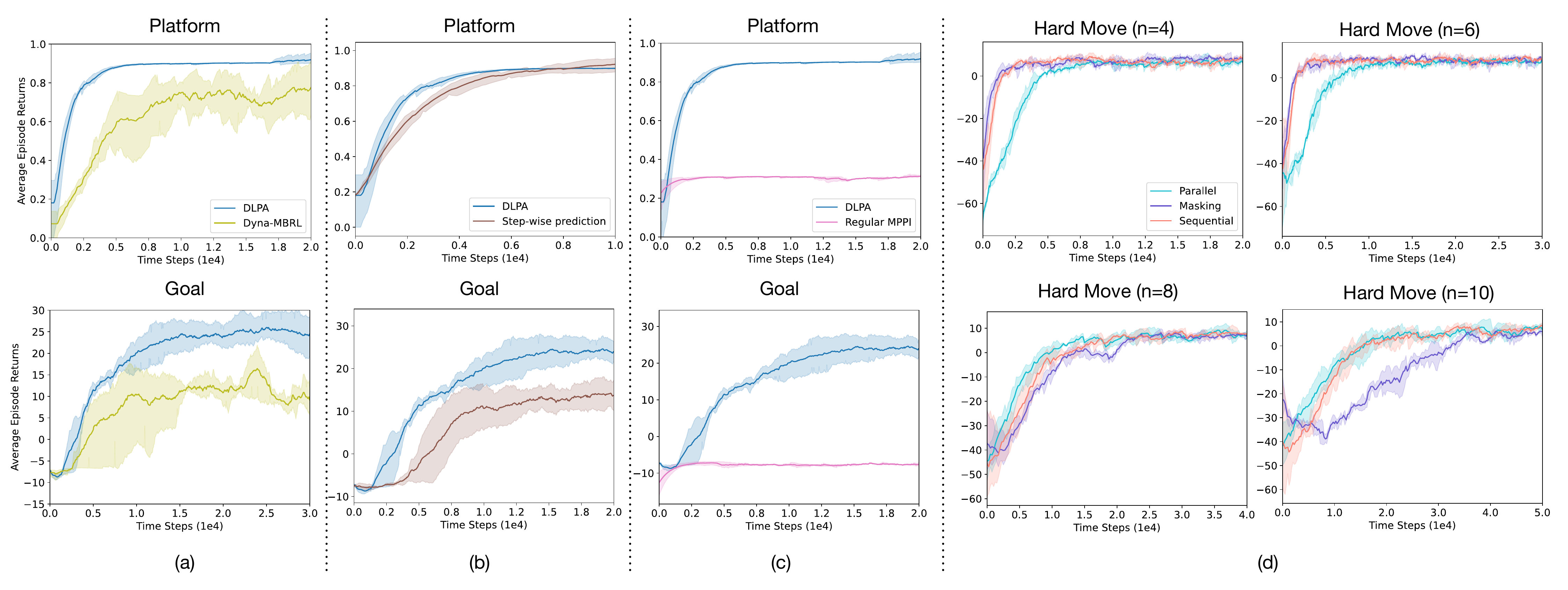}
    \caption{Ablation study on (a) the planning algorithm, (b) H-step prediction loss, (c) PAMDP-specific MPPI, (d) different inference model architectures.}
    \label{fig:ablall}
\end{figure*}
In this section, we investigate the importance of some major components in DLPA: the planning algorithm, three PAMDP-specific inference models, H-step prediction loss, separate reward predictors and PAMDP-specifc MPPI. We show the experimental results on \textit{Platform} and \textit{Goal}. 

\begin{table}[htb]
\scriptsize
\centering
\begin{tabular}{c|c|c}
\toprule
\centering
  & \textbf{Two reward predictors} & \textbf{One reward predictor} \\\midrule 
 \textit{Catch Point} & $\mathbf{7.56\pm4.86}$ & $\ -17.65\pm7.75$ \\
 \textit{Hard Move (4)} & $\mathbf{6.29\pm5.74}$ & $\ -37.38\pm8.62$ \\
 \textit{Hard Move (6)} & $\mathbf{8.48\pm5.45}$ & $-39.66\pm10.60$ \\
 \textit{Hard Move (8)} & $\mathbf{7.80\pm6.27}$ & $-40.17\pm11.82$ \\
 \textit{Hard Move (10)} & $\mathbf{6.35\pm9.97}$ & $-35.94\pm13.69$ \\ \bottomrule
\end{tabular}
\caption{Comparison between two separate reward predictors and one unified reward predictor on 5 domains after convergence.}
\label{tab:rew}
\end{table}

We first investigate how the category of the planning algorithm will affect the performance of Model-based RL in PAMDP. We compare DLPA with a Dyna-like non-MPC model-based RL baseline, where we explicitly train a policy using the data generated from the learned model. As shown in Figure~\ref{fig:ablall} first column, this non-MPC baseline generally performs better than the model-free baselines but is not as good as DLPA.

As we mentioned in Section~\ref{sec:dyn}, an important difference between our method and many prior model-based RL algorithms is the {\bf H-step prediction loss}, that is, when updating the dynamics model, we only give the model the start state and the action sequences sampled from the replay buffer as input and let it predict the whole trajectory. We show an empirical performance comparison for these two ways of updating the dynamics models in Figure~\ref{fig:ablall} second column. DLPA achieves significantly higher sample efficiency by predicting into several steps into the future with a length of horizon $H$. Presumably this is because during planning we will plan into the future with the exact same length $H$ thus the proposed updating process will help the agent to focus on predicting the parts of state more accurately which will affect the future more and help the agent achieve better cumulative return.

Another major modification we make in the CEM planning process is the {\bf PAMDP-specific MPPI}, where we keep a separate distribution over the continuous parameters for each discrete action and update them at each iteration instead of keeping just one independent distribution for the discrete actions and one independent distribution for the continuous parameters. We investigate the influence of this change by comparing to just a version of DLPA that just uses one independent distribution for all the continuous parameters. The results are shown in Figure~\ref{fig:ablall} third column,without this technique it is quite hard for DLPA to do proper planning.

Then we investigate the influence of {\bf different inference model structures} we propose in Section~\ref{sec:dyn}. As we show in Figure~\ref{fig:ablall}, the {\bf sequential} structure has the overall best performance compared to the others. This indicates that by decoupling the prediction process for the discrete and continuous component, the model is able to more precisely predict the long-term sequences specified by our H-step prediction loss.

Finally, we investigate how {\bf separate reward predictors} influence the performance of DLPA. The comparison results are shown in Table~\ref{tab:rew}. We compared to the default setting where only one reward predictor is trained regardless of the continuation signal. From the results, we find that in harder domains like catch point and hard move, without the separate reward predictors the method can hardly learn achieve any success.

\subsection{Visualization of Planning Iterations} 
\label{exp:vis}
As we mentioned before, for each planning step we run our prediction and sampling algorithm for 6 iterations and then pick the parameterized action. We show in Figure~\ref{fig:vis} the visualization of the imagined trajectories with top-$30$ returns for each iteration at a random time step when we evaluate DLPA in the Catch Point environment. Recall that we first sample the action sequences given a set of distribution parameters and then generate the trajectories and compute cumulative returns using the learned dynamics models. We can see that at the first iteration, the generated actions are quite random and cover a large part of the space. Then as we keep updating the distribution parameters with the cumulative returns inferred by the learned models, the imagined trajectories become more and more concentrated and finally narrow down to a relatively small-entropy distribution centering at the optimal actions. This indicates that the proposed planning method (described in Section~\ref{planning}) is able to help the agent find the greedy action to execute given the learned dynamics model while also keep a mind for exploration.
\begin{figure}[h]
\label{fig:vis}
    \centering
    \includegraphics[width=0.99\linewidth]{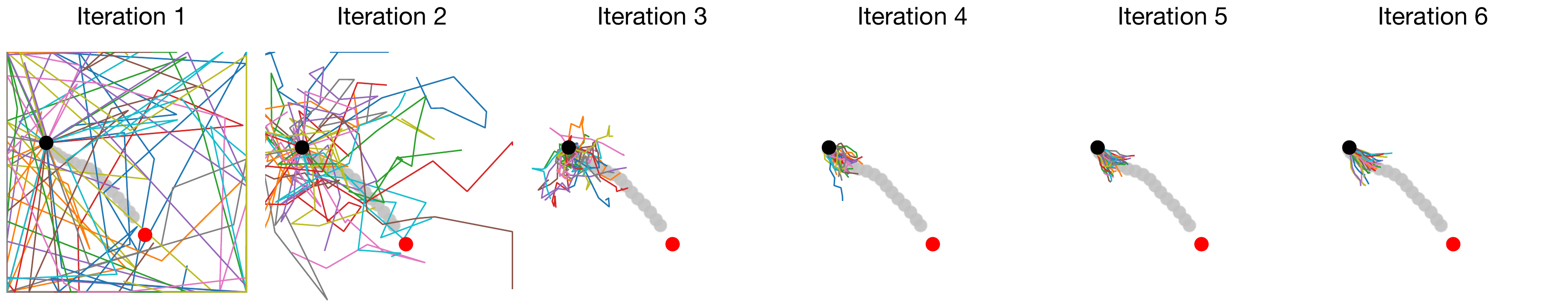}
    \caption{Visualization of DLPA's Planning iterations on the Catch Point tasks. Different color represents different imagined trajectories. The black point represents the agent's current position and the red point represents the target point. The grey trajectory is the actual trajectory taken by the agent.}
    \label{fig:vis}
\end{figure}
\section{Conclusion}
We have introduced DLPA, the first model-based Reinforcement Learning algorithm  for parameterized action spaces (also known as discrete-continuous hybrid action spaces).  DLPA first learns a dynamics model that is conditioned on the parameterized actions with a weighted trajectory-level prediction loss. Then we propose a novel planning method for parameterized actions by keep updating and sampling from the distribution over the discrete actions and continuous parameters.  DLPA outperforms the state-of-the-art PAMDP algorithms on 8 standard PAMDP benckmarks. We further empirically demonstrate the effectiveness of the different components of the proposed algorithm.


\section*{Acknowledgement}
This work was conducted using computational resources and services at the Center for Computation and Visualization, Brown University. The authors would like to thank the anonymous reviewers for valuable feedbacks. This work was supported in part by NSF grant \#1955361, and CAREER award \#1844960 to Konidaris, and ONR grant \#N00014-22-1-2592. Partial funding for this work provided by The Boston Dynamics AI Institute (``The AI Institute'').

\section*{Impact Statement}
\label{section:broader_impact}
We do not foresee significant societal impact resulting from our proposed method.

\bibliography{main}
\bibliographystyle{icml2024}

\newpage
\appendix
\onecolumn
\section{Algorithm}

\begin{algorithm*}
\caption{DLPA}
\label{alg:alg}
\begin{algorithmic}
\REQUIRE{Initialize Dynamics models $T_{\phi}(\hat{s}_{t+1}|s_{t}, k_{t}, z_{k_{t}}), R_{\phi}(\hat{r}_{t}|s_{t}, k_{t}, z_{k_{t}}), p_{\phi}(\hat{c}_t|s_{t+1})$, planning horizon $H$, a set of parameters $\mathcal{C}^0$}
\FOR{Time t = 0 to TaskHorizon}
\FOR{Iteration $j$=1 to $E$}
\STATE Sample $N$ action sequences with horizon $H$ from $\mathcal{C}^j$
\STATE Forward the dynamics model $T_{\phi}(\hat{s}_{t+1}|s_{t}, k_{t}, z_{k_{t}})$ to time step $t+H$ with input $s_t$ and the sampled action sequences and get $N$ trajectories $\{\tau_i\}_{1:N}$
\STATE Compute the cumulative return for each trajectory $\mathcal{J}_{\tau}$ with $R_{\phi}(\hat{r}_{t}|s_{t}, k_{t}, z_{k_{t}}), p_{\phi}(\hat{c}_t|s_{t+1})$
\STATE Select the trajectories with top-$n$ cumulative returns
\STATE Update $C^{j}$ with Equation~\ref{theta},~\ref{mu},~\ref{sigma}
\ENDFOR
    \STATE Execute the first action, $\{\hat{k}_{t_0}, \hat{z}_{\hat{k}_{t_0}}\}$, in the sampled optimal trajectory
    \STATE Receive transitions from the environment and add to replay buffer $\mathcal{B}$

    \STATE Sample trajectories $\{s_t, k_t, z_{k_t}, r_t, s_{t+1}, c_t\}_{t_0:t_0+H}$ with from the replay buffer $\mathcal{B}$
    \STATE Initialize $\mathcal{L}_{joint}$=0
    \FOR{$t=t_0:t_0+H$}
    \STATE $\hat{s}_{t+1} \sim T_{\phi}(\hat{s}_{t+1}|\hat{s}_{t}, k_{t}, z_{k_{t}})$
    \STATE $\hat{c}_{t} \sim p_{\phi}(\hat{c}_t|\hat{s}_{t+1})$
    \STATE $\hat{r}_{t} \sim R_{\phi}(\hat{r}_{t}|\hat{s}_{t}, k_{t}, z_{k_{t}})$
    \STATE $\mathcal{L}_{joint} \leftarrow \mathcal{L}_{joint} + \beta [\lambda_1 \|T_{\phi}(\hat{s}_{t+1}|\hat{s}_{t}, k_{t}, z_{k_{t}}) - s_{t+1}\|_{2}^{2} + \lambda_2 \|R_{\phi}(\hat{r}_{t+1}|\hat{s}_{t}, k_{t}, z_{k_{t}}) - r_{t+1}\|_{2}^{2} +
  \lambda_3 \|p_{\phi}(\hat{c}_{t}|\hat{s}_{t+1}) - c_{t}\|_{2}^{2}]$
    \ENDFOR
    \STATE $\phi \leftarrow \phi - \frac{1}{H}\eta\nabla_{\phi}L_{joint}$
\ENDFOR
\end{algorithmic}
\end{algorithm*}


\section{More theoretical results and proofs}
\begin{theorem}
\label{thm1app}
For a $(L_R^S, L_R^K, L_R^Z, L_T^S, L_T^K, L_T^Z)$-Lipschitz PAMDP and the learned DLPA $\epsilon_{T}$-accurate transition model $\hat{T}$ and $\epsilon_{R}$-accurate reward model $\hat{T}$, let $q(s)$ be the initial state distribution, $L_{\Bar{T}}^S = \min \{L_T^S, L_{\hat{T}}^S\}$, $L_{\Bar{T}}^K = \min \{L_T^K, L_{\hat{T}}^K\}$, $L_{\Bar{T}}^Z = \min \{L_T^Z, L_{\hat{T}}^Z\}$ and similarly define $L_{\Bar{R}}^{S}$, $L_{\Bar{R}}^{K}$, $L_{\Bar{R}}^{Z}$. Then $\forall n > 1$, starting from the same initial distribution $q(s)$, the $n$-step prediction error is bounded by:
\begin{equation}
\begin{aligned}
    \Delta(n) := &W\Big(T_n(q(s), k, \omega(\cdot|k)) , \hat{T}_n(q(s), \hat{k}, \hat{\omega}(\cdot|\hat{k}))\Big) \\ \leq ~& \epsilon_T \sum_{i=0}^{n-1}(L_{\Bar{T}}^S)^{i} + (L_{\Bar{T}}^Z\Delta_{\omega, \hat{\omega}} + L_{\Bar{T}}^Z\Delta_{k, \hat{k}})\sum_{i=0}^{n-1}(L_{\Bar{T}}^S)^{i} + L_{\Bar{T}}^K \sum_{i=0}^{n-1}(L_{\Bar{T}}^S)^{i}\mathbbm{1}(k_{n-i}\neq\hat{k_{n-i}})
\end{aligned}
\end{equation}    
\end{theorem}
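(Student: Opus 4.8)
The plan is to prove the claim by induction on $n$, establishing a one-step recursion for $\Delta(n)$ and then unrolling the resulting geometric series. Introduce the shorthand $P[\rho;k,\omega] := \int P(\cdot\mid s,k,\omega)\,d\rho(s)$ for the one-step pushforward of a measure $\rho$ through a kernel $P$ under action $(k,\omega)$, and write $\mu_{n-1}:=T_{n-1}(q(s),k,\omega(\cdot|k))$ and $\hat\mu_{n-1}:=\hat T_{n-1}(q(s),\hat k,\hat\omega(\cdot|\hat k))$ for the true and learned $(n-1)$-step state distributions, so that $T_n = T[\mu_{n-1};k_n,\omega_n]$ and $\hat T_n = \hat T[\hat\mu_{n-1};\hat k_n,\hat\omega_n]$. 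The base case is immediate, since $\Delta(0)=W(q(s),q(s))=0$: both rollouts share the initial distribution.

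The heart of the argument is a telescoping decomposition of $W(T_n,\hat T_n)$ along a chain of intermediate pushforwards that alter one ingredient at a time, using two standard facts about the Wasserstein metric: joint convexity under a shared mixing measure, $W(P[\rho;a],Q[\rho;a])\le\int W(P(\cdot|s,a),Q(\cdot|s,a))\,d\rho(s)$, and the fact that an $L$-Lipschitz kernel induces an $L$-Lipschitz pushforward map, $W(P[\rho;a],P[\rho';a])\le L\,W(\rho,\rho')$. First I would swap the learned kernel for the true kernel on the common learned input $\hat\mu_{n-1}$ and learned action; by $\epsilon_T$-accuracy and convexity this costs at most $\epsilon_T$. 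Next, holding the true kernel and input $\hat\mu_{n-1}$ fixed, I would morph the learned action into the true action in three convexity-reduced sub-steps: (a) change the continuous law from $\hat\omega_n(\cdot|\hat k_n)$ to $\omega_n(\cdot|\hat k_n)$, bounded by $L_{\Bar T}^Z\Delta_{\omega,\hat\omega}$; (b) change it from $\omega_n(\cdot|\hat k_n)$ to $\omega_n(\cdot|k_n)$, bounded by $L_{\Bar T}^Z\Delta_{k,\hat k}$; and (c) flip the discrete index from $\hat k_n$ to $k_n$ with the continuous law held fixed, bounded by $L_{\Bar T}^K d_K(k_n,\hat k_n)=L_{\Bar T}^K\mathbbm{1}(k_n\ne\hat k_n)$ via the Kronecker metric. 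Finally, holding the true kernel and true action fixed, I would replace the input $\hat\mu_{n-1}$ by $\mu_{n-1}$; since $s\mapsto T(\cdot|s,k_n,\omega_n)$ is $L_{\Bar T}^S$-Lipschitz, this contributes $L_{\Bar T}^S\,W(\mu_{n-1},\hat\mu_{n-1})=L_{\Bar T}^S\,\Delta(n-1)$.

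Summing the four contributions gives the recursion
\begin{equation*}
\Delta(n)\le \epsilon_T + L_{\Bar T}^Z(\Delta_{\omega,\hat\omega}+\Delta_{k,\hat k}) + L_{\Bar T}^K\,\mathbbm{1}(k_n\ne\hat k_n) + L_{\Bar T}^S\,\Delta(n-1).
\end{equation*}
Unrolling from $\Delta(0)=0$ discounts the contribution generated at step $m$ by $(L_{\Bar T}^S)^{\,n-m}$; reindexing by the lag $i=n-m$ turns the step-independent terms $\epsilon_T$ and $L_{\Bar T}^Z(\Delta_{\omega,\hat\omega}+\Delta_{k,\hat k})$ into the geometric factor $\sum_{i=0}^{n-1}(L_{\Bar T}^S)^i$, while the discrete-mismatch term retains its step dependence as $L_{\Bar T}^K\sum_{i=0}^{n-1}(L_{\Bar T}^S)^i\,\mathbbm{1}(k_{n-i}\ne\hat k_{n-i})$, matching the claimed bound exactly.

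I expect the main obstacle to be justifying the minimized constants $L_{\Bar T}^\bullet=\min\{L_T^\bullet,L_{\hat T}^\bullet\}$ rather than the true-model constants that the chain above produces directly. The device is the Lipschitz model class assumption, which guarantees that $\hat T$ is itself Lipschitz with constants $L_{\hat T}^\bullet$: since the single model-swap link can be placed anywhere along the chain, moving it before or after a given perturbation block lets that block be evaluated on whichever of $T$ or $\hat T$ has the smaller constant. The delicate point is doing this without spawning additional $\epsilon_T$ terms, so one must verify that for each routing a single model-swap suffices and that the convexity and Lipschitz-pushforward lemmas still apply cleanly at every link (in particular that the common-input and common-action requirements hold at the swap). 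Confirming the two supporting Wasserstein facts by gluing optimal couplings is routine by comparison.
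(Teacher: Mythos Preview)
Your proposal is correct and follows essentially the same approach as the paper: a triangle-inequality telescoping chain that isolates the model swap (cost $\epsilon_T$), the action perturbations (costs $L^Z_{\bar T}\Delta_{\omega,\hat\omega}$, $L^Z_{\bar T}\Delta_{k,\hat k}$, $L^K_{\bar T}\mathbbm{1}(k_n\neq\hat k_n)$), and the input-distribution shift (cost $L^S_{\bar T}\Delta(n-1)$), then unrolls the resulting recursion into a geometric sum, with the minimized constants obtained by rerouting the chain through $\hat T$ instead of $T$. The only organizational difference is that the paper first splits at the $n$-step level into a same-action/different-model piece and a same-model/different-action piece and recurses within each, whereas you run a single one-step recursion and unroll once; the ingredients and the final bound are identical.
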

\begin{proof}
Firstly, for one-step prediction given the initial distribution $q(s)$ and the same $k$ as well as $\omega$:
    \begin{equation}
\begin{aligned}
    \Delta(1) := &W\Big(T(q(s), k, \omega(\cdot|k)) , \hat{T}(q(s), k, \omega(\cdot|k))\Big) \\ = &\sup_{f}\int\int(\hat{T}(s'|s,k,\omega) -T(s'|s,k,\omega))f(s')q(s)dsds' ~\textbf{Duality for Wasserstein Metric}\\ \leq&\int\sup_f\int(\hat{T}(s'|s,k,\omega) -T(s'|s,k,\omega))f(s')ds'q(s)ds ~\textbf{Jensen's inequality}\\&=\int W(\hat{T}(s'|s,k,\omega),T(s'|s,k,\omega))q(s) ds \leq \int \epsilon_T q(s)ds = \epsilon_T
\end{aligned}
\end{equation}
Then for the n-step prediction error with different discrete action and continuous parameter distributions:
\begin{equation}
\label{prfstep1}
\begin{aligned}
    \Delta(n) := &W\Big(T_n(q(s), k, \omega(\cdot|k)) , \hat{T}_n(q(s), \hat{k}, \hat{\omega}(\cdot|\hat{k}))\Big) \\ \leq & W\Big(T_n(q(s), k, \omega(\cdot|k)) , T_n(q(s), \hat{k}, \hat{\omega}(\cdot|\hat{k}))\Big) + W\Big(T_n(q(s), \hat{k}, \hat{\omega}(\cdot|\hat{k})) , \hat{T}_n(q(s), \hat{k}, \hat{\omega}(\cdot|\hat{k}))\Big)~\textbf{Triangle inequality} \\ 
\end{aligned}
\end{equation}
For the second term in~\ref{prfstep1}:
\begin{equation}
\label{prfstep2}
\begin{aligned}
    W\Big(T_n(q(s), \hat{k}, \hat{\omega}(\cdot|\hat{k})) , &\hat{T}_n(q(s), \hat{k}, \hat{\omega}(\cdot|\hat{k}))\Big) = W\Big(T(T_{n-1}(q(s), \hat{k}, \hat{\omega}(\cdot|\hat{k})), \hat{k}_n, \hat{\omega}_n) , \hat{T}(\hat{T}_{n-1}(q(s), \hat{k}, \hat{\omega}(\cdot|\hat{k})), \hat{k}_n, \hat{\omega}_n)\Big) \\ \leq \epsilon_T +&  L_{\Bar{T}}^S W\Big(T_{n-1}(q(s), \hat{k}, \hat{\omega}(\cdot|\hat{k})) , \hat{T}_{n-1}(q(s), \hat{k}, \hat{\omega}(\cdot|\hat{k}))\Big)~\textbf{Composition Lemma~\citep{DBLP:conf/icml/AsadiML18}} \\ \cdots \\\leq \epsilon_T&\sum_{i=0}^{n-1}(L_{\Bar{T}}^S)^{i}
\end{aligned}
\end{equation}
For the first term in~\ref{prfstep1}
\begin{equation}
\label{prfstep3}
\begin{aligned}
    &W\Big(T_n(q(s), k, \omega(\cdot|k)) , T_n(q(s), \hat{k}, \hat{\omega}(\cdot|\hat{k}))\Big) \leq \\ &W\Big(T_n(q(s), k, \omega(\cdot|k)) , T_n(q(s), k, \hat{\omega}(\cdot|\hat{k}))\Big) + W\Big(T_n(q(s), k, \hat{\omega}(\cdot|\hat{k})) , T_n(q(s), \hat{k}, \hat{\omega}(\cdot|\hat{k}))\Big)  
\end{aligned}
\end{equation}
For the second term in~\ref{prfstep3}:
\begin{equation}
\label{prfstep4}
\begin{aligned}
    &W\Big(T_n(q(s), k, \hat{\omega}(\cdot|\hat{k})) , T_n(q(s), \hat{k}, \hat{\omega}(\cdot|\hat{k}))\Big) \\ &= W\Big(T(T_{n-1}(q(s), k, \hat{\omega}(\cdot|\hat{k})), k_{n}, \hat{\omega_n}), T(T_{n-1}(q(s), \hat{k}, \hat{\omega}(\cdot|\hat{k})), \hat{k_{n}}, \hat{\omega_n}) \Big)\\ &\leq L_T^Kd(k_n,\hat{k_n}) + L_T^S W\Big(T_{n-1}(q(s), k, \hat{\omega}(\cdot|\hat{k})) , T_{n-1}(q(s), \hat{k}, \hat{\omega}(\cdot|\hat{k}))\Big) \\& = L_T^K\mathbbm{1}(k_n\neq\hat{k_n}) + L_T^S W\Big(T_{n-1}(q(s), k, \hat{\omega}(\cdot|\hat{k})) , T_{n-1}(q(s), \hat{k}, \hat{\omega}(\cdot|\hat{k}))\Big) \\&\leq L_T^K\mathbbm{1}(k_n\neq\hat{k_n}) + L_T^S L_T^K\mathbbm{1}(k_{n-1}\neq\hat{k_{n-1}}) + (L_T^S)^2 W\Big(T_{n-2}(q(s), k, \hat{\omega}(\cdot|\hat{k})) , T_{n-2}(q(s), \hat{k}, \hat{\omega}(\cdot|\hat{k}))\Big)\\ &\cdots \\ &\leq L_T^K \sum_{i=0}^{n-1}(L_T^S)^{i}\mathbbm{1}(k_{n-i}\neq\hat{k_{n-i}})
\end{aligned}
\end{equation}
For the first term in~\ref{prfstep3}:
\begin{equation}
\label{prfstep5}
\begin{aligned}
    &W\Big(T_n(q(s), k, \omega(\cdot|k)) , T_n(q(s), k, \hat{\omega}(\cdot|\hat{k}))\Big) \leq \\ & W\Big(T_n(q(s), k, \omega(\cdot|k)) , T_n(q(s), k, \hat{\omega}(\cdot|k))\Big) + W\Big(T_n(q(s), k, \hat{\omega}(\cdot|k)) , T_n(q(s), k, \hat{\omega}(\cdot|\hat{k}))\Big)\\
\end{aligned}
\end{equation}
For the first term in~\ref{prfstep5}:
\begin{equation}
\label{prfstep6}
\begin{aligned}
    &W\Big(T_n(q(s), k, \omega(\cdot|k)) , T_n(q(s), k, \hat{\omega}(\cdot|k))\Big) \\ &= W\Big(T(T_{n-1}(q(s), k, \omega(\cdot|k)), k_{n}, \omega_n), T(T_{n-1}(q(s), k, \hat{\omega}(\cdot|k)), k_{n}, \hat{\omega_n}) \Big)\\ &\leq L_T^Z\Delta_{\omega, \hat{\omega}} + L_T^S W\Big(T_{n-1}(q(s), k, \omega(\cdot|k)) , T_{n-1}(q(s), k, \hat{\omega}(\cdot|k))\Big) \\&\leq L_T^Z\Delta_{\omega, \hat{\omega}} + L_T^S L_T^Z\Delta_{\omega, \hat{\omega}} + (L_T^S)^2 W\Big(T_{n-2}(q(s), k, \omega(\cdot|k)) , T_{n-2}(q(s), k, \hat{\omega}(\cdot|k))\Big)\\ &\cdots \\ &\leq L_T^Z\Delta_{\omega, \hat{\omega}} \sum_{i=0}^{n-1}(L_T^S)^{i}
\end{aligned}
\end{equation}
Similarly, for the second term in~\ref{prfstep5}:
\begin{equation}
\label{prfstep6}
\begin{aligned}
    &W\Big(T_n(q(s), k, \hat{\omega}(\cdot|k)) , T_n(q(s), k, \hat{\omega}(\cdot|\hat{k}))\Big) \\  &\leq L_T^Z\Delta_{k, \hat{k}} \sum_{i=0}^{n-1}(L_T^S)^{i}\mathbbm{1}(k_{n-i}\neq\hat{k_{n-i}})
\end{aligned}
\end{equation}
Combining all the results above and continue~\ref{prfstep1}, we have:
\begin{equation}
\label{prf1final1}
\begin{aligned}
    \Delta(n) \leq &\epsilon_T\sum_{i=0}^{n-1}(L_{\Bar{T}}^S)^{i} + L_T^K \sum_{i=0}^{n-1}(L_T^S)^{i} + L_T^Z\Delta_{\omega, \hat{\omega}} \sum_{i=0}^{n-1}(L_T^S)^{i} +  L_T^Z\Delta_{k, \hat{k}} \sum_{i=0}^{n-1}(L_T^S)^{i} \\ &= \epsilon_T\sum_{i=0}^{n-1}(L_{\Bar{T}}^S)^{i} + \sum_{i=0}^{n-1}(L_T^S)^{i}(L_T^Z\Delta_{\omega, \hat{\omega}} + L_T^Z\Delta_{k, \hat{k}}\mathbbm{1}(k_{n-i}\neq\hat{k_{n-i}})) + L_T^K \sum_{i=0}^{n-1}(L_T^S)^{i}\mathbbm{1}(k_{n-i}\neq\hat{k_{n-i}})
\end{aligned}
\end{equation}
Now if replace $T_n(q(s), \hat{k}, \hat{\omega}(\cdot|\hat{k}))$ with $\hat{T_n}(q(s), k, \omega(\cdot|k))$ in the triangle inequality~\ref{prfstep1} and do all the derivation again, we have:
\begin{equation}
\label{prf1final2}
\begin{aligned}
    \Delta(n) \leq & \epsilon_T\sum_{i=0}^{n-1}(L_{\Bar{T}}^S)^{i} + \sum_{i=0}^{n-1}(L_{\hat{T}}^S)^{i}(L_{\hat{T}}^Z\Delta_{\omega, \hat{\omega}} + (L_{\hat{T}}^Z\Delta_{k, \hat{k}} + L_{\hat{T}}^K)\mathbbm{1}(k_{n-i}\neq\hat{k_{n-i}}))
\end{aligned}
\end{equation}
Combining~\ref{prf1final1} and~\ref{prf1final2} concludes the proof.
\end{proof}

{\bf Theorem 5.2}. For a $(L_R^S, L_R^K, L_R^Z, L_T^S, L_T^K, L_T^Z)$-Lipschitz PAMDP and the learned DLPA $\epsilon_{T}$-accurate transition model $\hat{T}$ and $\epsilon_{R}$-accurate reward model $\hat{T}$, let $L_{\Bar{T}}^S = \min \{L_T^S, L_{\hat{T}}^S\}$, $L_{\Bar{T}}^K = \min \{L_T^K, L_{\hat{T}}^K\}$, $L_{\Bar{T}}^Z = \min \{L_T^Z, L_{\hat{T}}^Z\}$ and similarly define $L_{\Bar{R}}^{S}$, $L_{\Bar{R}}^{K}$, $L_{\Bar{R}}^{Z}$. If $L_{\Bar{T}}^S < 1$, then the regret of the rollout trajectory $\hat{\tau} = \{\hat{s_1}, \hat{k_1}, \hat{\omega_1}(\cdot|\hat{k_1}), \hat{s_2}, \hat{k_2}, \hat{\omega_2}(\cdot|\hat{k_2}),\cdots\}$ from DLPA is bounded by:
\begin{equation}
\begin{aligned}
    |\mathcal{J}_{\tau^*} - \mathcal{J}_{\hat{\tau}}| := &\sum_{t=1}^{H}\gamma^{t-1}|R(s_t, k_t, \omega_t(\cdot|k_t)) - \hat{R}(\hat{s_t}, \hat{k_t}, \hat{\omega_t}(\cdot|\hat{k_t}))| \\
    \leq &~\mathcal{O}\Big((L_{\Bar{R}}^K+L^S_{\Bar{R}}L^K_{\Bar{T}})m+H(\epsilon_R+ L_{\Bar{R}}^S\epsilon_T + (L_{\Bar{R}}^Z+L_{\Bar{R}}^SL_{\Bar{T}}^Z)(\frac{m}{H}\Delta_{k, \hat{k}} +\Delta_{\omega, \hat{\omega}}))\Big),
\end{aligned}
\end{equation}

where $m = \sum_{t=1}^H\mathbbm{1}(k_t\neq\hat{k_t}), \Delta_{\omega,\hat{\omega}} = W(\omega(\cdot|k), \hat{\omega}(\cdot|k))$, $\Delta_{k,\hat{k}} = W(\omega(\cdot|k), \omega(\cdot|\hat{k}))$, $N$ denotes the number of samples and $H$ is the planning horizon.
\begin{proof}
At timestep $t$:
    \begin{equation}
    \label{prf2step1}
\begin{aligned}
    &|R(s_t, k_t, \omega_t(\cdot|k_t)) - \hat{R}(\hat{s_t}, \hat{k_t}, \hat{\omega_t}(\cdot|\hat{k_t}))| \leq \\ &|R(s_t, k_t, \omega_t(\cdot|k_t)) - \hat{R}(\hat{s_t}, k_t, \hat{\omega_t}(\cdot|\hat{k_t}))| + |\hat{R}(\hat{s_t}, k_t, \hat{\omega_t}(\cdot|\hat{k_t})) - \hat{R}(\hat{s_t}, \hat{k_t}, \hat{\omega_t}(\cdot|\hat{k_t}))| 
\end{aligned}
\end{equation}
For the second term in~\ref{prf2step1}:
\begin{equation}
    \label{prf2step2}
\begin{aligned}
    & |\hat{R}(\hat{s_t}, k_t, \hat{\omega_t}(\cdot|\hat{k_t})) - \hat{R}(\hat{s_t}, \hat{k_t}, \hat{\omega_t}(\cdot|\hat{k_t}))| \leq L_{\hat{R}}^Kd(k,\hat{k}) = L_{\hat{R}}^K\mathbbm{1}(k_{t}\neq\hat{k_{t}})
\end{aligned}
\end{equation}
For the first term in~\ref{prf2step1}:
\begin{equation}
    \label{prf2step3}
\begin{aligned}
    & |R(s_t, k_t, \omega_t(\cdot|k_t)) - \hat{R}(\hat{s_t}, k_t, \hat{\omega_t}(\cdot|\hat{k_t}))| \leq \\ &|R(s_t, k_t, \omega_t(\cdot|k_t)) - \hat{R}(\hat{s_t}, k_t, \omega_t(\cdot|k_t))|+|\hat{R}(\hat{s_t}, k_t, \omega_t(\cdot|k_t)) - \hat{R}(\hat{s_t}, k_t, \hat{\omega_t}(\cdot|\hat{k_t}))| \\ &\leq \epsilon_R+L_{\hat{R}}^S\Delta(t-1) + |\hat{R}(\hat{s_t}, k_t, \omega_t(\cdot|k_t)) - \hat{R}(\hat{s_t}, k_t, \hat{\omega_t}(\cdot|\hat{k_t}))|~\textbf{By the definition of}~\Delta(n)~\textbf{and Composition Lemma}\\ &\leq \epsilon_R+L_{\hat{R}}^S\Delta(t-1) + |\hat{R}(\hat{s_t}, k_t, \omega_t(\cdot|k_t)) - \hat{R}(\hat{s_t}, k_t, \omega_t(\cdot|\hat{k_t}))| + |\hat{R}(\hat{s_t}, k_t, \omega_t(\cdot|\hat{k_t})) - \hat{R}(\hat{s_t}, k_t, \hat{\omega_t}(\cdot|\hat{k_t}))| \\ &\leq \epsilon_R+L_{\hat{R}}^S\Delta(t-1) + L_{\hat{R}}^Z\Delta_{k, k'} + |\hat{R}(\hat{s_t}, k_t, \omega_t(\cdot|\hat{k_t})) - \hat{R}(\hat{s_t}, k_t, \hat{\omega_t}(\cdot|\hat{k_t}))| \\&\leq \epsilon_R+L_{\hat{R}}^S\Delta(t-1) + L_{\hat{R}}^Z(\Delta_{k, \hat{k}} +\Delta_{\omega, \hat{\omega}})\\ &\leq \epsilon_R+L_{\hat{R}}^S\epsilon_T\sum_{i=0}^{t-2}(L_{\hat{T}}^S)^{i}+ L_{\hat{R}}^S \sum_{i=0}^{t-2}(L_{\hat{T}}^S)^{i}(L_{\hat{T}}^Z\Delta_{\omega, \hat{\omega}} + (L_{\hat{T}}^Z\Delta_{k, \hat{k}} + L_{\hat{T}}^K)\mathbbm{1}(k_{t-1-i}\neq\hat{k_{t-1-i}})) \\ &+ L_{\hat{R}}^Z(\Delta_{k, \hat{k}} +\Delta_{\omega, \hat{\omega}})~\textbf{According to Theorem~\ref{thm1app}} \\ 
\end{aligned}
\end{equation}
Now we go back to~\ref{prf2step1}:
\begin{equation}
    \label{prf2step4}
\begin{aligned}
    &|R(s_t, k_t, \omega_t(\cdot|k_t)) - \hat{R}(\hat{s_t}, \hat{k_t}, \hat{\omega_t}(\cdot|\hat{k_t}))| \leq \\ &|R(s_t, k_t, \omega_t(\cdot|k_t)) - \hat{R}(\hat{s_t}, k_t, \hat{\omega_t}(\cdot|\hat{k_t}))| + |\hat{R}(\hat{s_t}, k_t, \hat{\omega_t}(\cdot|\hat{k_t})) - \hat{R}(\hat{s_t}, \hat{k_t}, \hat{\omega_t}(\cdot|\hat{k_t}))| \\ &\leq \epsilon_R+L_{\hat{R}}^K\mathbbm{1}(k_{t}\neq\hat{k_{t}}) + L_{\hat{R}}^S \sum_{i=0}^{t-2}(L_{\hat{T}}^S)^{i}(\epsilon_T+L_{\hat{T}}^Z\Delta_{\omega, \hat{\omega}} + (L_{\hat{T}}^Z\Delta_{k, \hat{k}} + L_{\hat{T}}^K)\mathbbm{1}(k_{t-1-i}\neq\hat{k_{t-1-i}})) \\ &+ L_{\hat{R}}^Z(\Delta_{k, \hat{k}} +\Delta_{\omega, \hat{\omega}})
\end{aligned}
\end{equation}
Then we can compute the regret:
\begin{equation}
\label{prf2step5}
\begin{aligned}
    |\mathcal{J}_{\tau^*} - \mathcal{J}_{\hat{\tau}}| &:= \sum_{t=1}^{H}\gamma^{t-1}|R(s_t, k_t, \omega_t(\cdot|k_t)) - \hat{R}(\hat{s_t}, \hat{k_t}, \hat{\omega_t}(\cdot|\hat{k_t}))| \\
    &\leq \sum_{t=1}^{H}\gamma^{t-1} [\epsilon_R+L_{\hat{R}}^K\mathbbm{1}(k_{t}\neq\hat{k_{t}}) + \\ &L_{\hat{R}}^S \sum_{i=0}^{t-2}(L_{\hat{T}}^S)^{i}(\epsilon_T+L_{\hat{T}}^Z\Delta_{\omega, \hat{\omega}} + (L_{\hat{T}}^Z\Delta_{k, \hat{k}} + L_{\hat{T}}^K)\mathbbm{1}(k_{t-1-i}\neq\hat{k_{t-1-i}})) + L_{\hat{R}}^Z(\Delta_{k, \hat{k}} +\Delta_{\omega, \hat{\omega}})]\\
    & \\
    &\leq ~\mathcal{O}\Big((L_{\Bar{R}}^K+L^S_{\hat{R}}L^K_{\Bar{T}})m+H(\epsilon_R+ L_{\hat{R}}^S\epsilon_T+  (L_{\hat{R}}^Z+L_{\hat{R}}^SL_{\Bar{T}}^Z)(\frac{m}{H}\Delta_{k, \hat{k}} +\Delta_{\omega, \hat{\omega}}))\Big)~\textbf{Assuming}~\gamma=1
\end{aligned}
\end{equation}
Similar to the proof of theorem~\ref{thm1app}, if we change the middle term in the triangle inequalities we will have:
\begin{equation}
\label{prf2step6}
\begin{aligned}
    |\mathcal{J}_{\tau^*} - \mathcal{J}_{\hat{\tau}}| &\leq ~\mathcal{O}\Big((L_{\Bar{R}}^K+L^S_{\Bar{R}}L^K_{\Bar{T}})m+H(\epsilon_R+ L_{\Bar{R}}^S\epsilon_T + (L_{\Bar{R}}^Z+L_{\Bar{R}}^SL_{\Bar{T}}^Z)(\frac{m}{H}\Delta_{k, \hat{k}} +\Delta_{\omega, \hat{\omega}}))\Big)
\end{aligned}
\end{equation}
Combine~\ref{prf2step5} and ~\ref{prf2step6} concludes the proof.
\end{proof}
\begin{lemma}
    Let $|Z|$ denote the cardinality of the continuous parameters space and $N$ be the number of samples. Then, with probability at least $1-\delta$:
    \begin{equation}
        \begin{aligned}
            \frac{m}{H}\Delta_{k, \hat{k}} +\Delta_{\omega, \hat{\omega}} \leq \frac{2m}{H}\sqrt{|Z|} + \frac{2}{N}\ln{\frac{2|Z|}{\delta}}
        \end{aligned}
    \end{equation}
\end{lemma}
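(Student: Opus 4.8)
The plan is to bound the left–hand side by treating its two summands, $\frac{m}{H}\Delta_{k,\hat k}$ and $\Delta_{\omega,\hat\omega}$, separately: the first deterministically, using only metric properties of the Wasserstein distance on the continuous–parameter space, and the second with probability at least $1-\delta$, via a concentration argument for the empirical estimate of $\omega(\cdot\mid k)$ built from the $N$ rollout samples. Since randomness enters only through the second term, a single high–probability statement — a union bound over the (at most) $|Z|$ atoms of $Z$ with total failure budget $\delta$ — controls the whole sum.

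For the first term, recall $\Delta_{k,\hat k}=W(\omega(\cdot\mid k),\omega(\cdot\mid\hat k))$ is a Wasserstein distance between two probability measures supported on $Z$, hence at most the diameter of $Z$ under $d_Z$: the independent coupling already gives $W(\mu,\nu)\le\sup_{z,z'\in Z}d_Z(z,z')$. Under the usual normalization embedding $Z$ into $[-1,1]^{|Z|}$ (reading $|Z|$ as the parameter dimension) this diameter is $\le 2\sqrt{|Z|}$; reading $|Z|$ as the cardinality with $d_Z\le 1$ gives diameter $\le 1\le\sqrt{|Z|}$. Either way $\Delta_{k,\hat k}\le 2\sqrt{|Z|}$, so $\frac{m}{H}\Delta_{k,\hat k}\le\frac{2m}{H}\sqrt{|Z|}$.

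For the second term, interpret $\hat\omega(\cdot\mid k)$ as the empirical distribution of the $N$ i.i.d. continuous–parameter samples drawn under discrete action $k$ during planning. Bound $W(\omega(\cdot\mid k),\hat\omega(\cdot\mid k))\le\mathrm{diam}(Z)\cdot\mathrm{TV}(\omega(\cdot\mid k),\hat\omega(\cdot\mid k))$ and control the total variation through the $\ell_1$ deviation of the empirical pmf: for each atom $z\in Z$, $N\hat\omega(z\mid k)\sim\mathrm{Bin}(N,\omega(z\mid k))$, so a Bernstein/Chernoff tail bound gives $\Pr[\,|\hat\omega(z\mid k)-\omega(z\mid k)|>t\,]\le 2e^{-cNt}$ in the sub‑exponential tail regime that is in force here; a union bound over the $\le|Z|$ atoms with total failure probability $\delta$ then yields a uniform per–coordinate deviation of order $\tfrac1N\ln\tfrac{2|Z|}{\delta}$, and aggregating over atoms while absorbing constants gives $\Delta_{\omega,\hat\omega}\le\tfrac2N\ln\tfrac{2|Z|}{\delta}$ with probability $\ge 1-\delta$.

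Adding the two bounds, and noting that only the second carries randomness, yields the claim. The main obstacle is the concentration step: one must be careful about which tail regime of the binomial is invoked so the deviation scales like $1/N$ rather than the generic $1/\sqrt N$, and about the reduction from the Wasserstein metric to total variation when $Z$ is not literally finite — the clean fixes are to invoke the lemma's cardinality hypothesis directly (making the union bound exact) or, for continuous $Z$, to first discretize $Z$ into $|Z|$ cells of small diameter and absorb the discretization error. Matching the precise constant $\tfrac2N\ln\tfrac{2|Z|}{\delta}$ is then routine bookkeeping.
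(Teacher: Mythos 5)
Your bound on the first term is fine and lands where the paper does: $\Delta_{k,\hat k}\le 2\sqrt{|Z|}$ via a diameter argument on $[-1,1]^{|Z|}$ (the paper gets the same constant from the closed-form expression for $W_2$ between Gaussians after dropping the covariance trace, with $|Z|$ read as the dimension of the parameter space). The gap is in the second term. The paper does \emph{not} treat $\hat\omega(\cdot|k)$ as an empirical pmf over atoms; it uses the fact, built into DLPA's planner, that $\omega(\cdot|k)$ and $\hat\omega(\cdot|k)$ are both Gaussians $\mathcal{N}(\mu_k,\Sigma_k)$ and $\mathcal{N}(\hat\mu_k,\hat\Sigma_k)$, so that (dropping the covariance trace) the distance reduces to the squared mean gap $\|\mu_k-\hat\mu_k\|_2^2$. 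Hoeffding applied coordinatewise to the empirical mean gives $|\mu_{k,i}-\hat\mu_{k,i}|^2\le \frac{(z_i^{\max}-z_i^{\min})^2}{2N}\ln\frac{2}{\delta}$ --- it is the \emph{square} of a $1/\sqrt N$ deviation that produces the $1/N$ rate --- and a union bound over the $|Z|$ coordinates with range $[-1,1]$ yields $\frac{2}{N}\ln\frac{2|Z|}{\delta}$.

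Your route cannot recover this. First, for a binomial proportion the deviation $|\hat\omega(z)-\omega(z)|$ concentrates at rate $\sqrt{\ln(1/\delta)/N}$, not $\ln(1/\delta)/N$; the sub-exponential tail $e^{-cNt}$ you invoke only applies when $t$ is at least of the order of the atom's own mass, which you have no control over, so the claim that the $1/N$ regime ``is in force here'' is unjustified. Second, even granting a per-atom deviation of order $\frac{1}{N}\ln\frac{2|Z|}{\delta}$, passing from per-atom control to total variation (and hence to $W\le\mathrm{diam}(Z)\cdot\mathrm{TV}$) requires summing over the $|Z|$ atoms, which leaves a multiplicative factor of $|Z|$ that is not a constant and cannot be absorbed. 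The missing ingredient is the Gaussian parameterization of $\omega$ from Section 4.2: once both distributions are Gaussians, $\Delta_{\omega,\hat\omega}$ is controlled by the squared deviation of an empirical mean, and the squaring is exactly what converts Hoeffding's $1/\sqrt N$ into the stated $1/N$.
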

\begin{proof}
By definition:
\begin{equation}
    \begin{aligned}
        \Delta_{k,\hat{k}} := W(\omega(\cdot|k), \omega(\cdot|\hat{k}))
    \end{aligned}
\end{equation}
Recall that in our algorithm, we assume $\omega(\cdot|k)$ is a Gaussian distribution $\mathcal{N}(\mu_k, \Sigma_k)$ and $z\sim\omega(\cdot)$ takes value in the range $[-1, 1]$.

Now we use the definition of {\bf 2nd Wasserstein distance} {\bf $W_2$} between multivariate Gaussian distributions:
\begin{equation}
    \begin{aligned}
        W_2(\omega(\cdot|k), \omega(\cdot|\hat{k})) = \|\mu_k - \mu_{\hat{k}}\|^2_2 + Tr(\Sigma + \hat{\Sigma} - 2(\Sigma^{1/2}\hat{\Sigma}\Sigma^{1/2})^{1/2})
    \end{aligned}
\end{equation}
Ignore the covariance term, we have:
\begin{equation}
    W(\omega(\cdot|k), \omega(\cdot|\hat{k})) \leq 2\sqrt{|Z|}
\end{equation}
By definition:
\begin{equation}
    \begin{aligned}
        \Delta_{\omega,\hat{\omega}} := W(\omega(\cdot|k), \hat{\omega}(\cdot|k))
    \end{aligned}
\end{equation}
Similarly, we have:
\begin{equation}
    \begin{aligned}
        W_2(\omega(\cdot|k), \hat{\omega}(\cdot|k)) = \|\mu_k - \hat{\mu_{k}}\|^2_2 + Tr(\Sigma + \hat{\Sigma} - 2(\Sigma^{1/2}\hat{\Sigma}\Sigma^{1/2})^{1/2})
    \end{aligned}
\end{equation}
Ignore the covariance term, by {\bf Hoeffding's inequality}, with probability $1-\delta$ we have for each dimension $i$ of $Z$:
\begin{equation}
    \begin{aligned}
        \|\mu_{k, i} - \hat{\mu_{k, i}}\|^2_2 \leq \frac{(z^{max}_i - z^{min}_i)^2}{2N}\ln{\frac{2}{\delta}}
    \end{aligned}
\end{equation}
By the union bound and the range of $z$:
\begin{equation}
    \begin{aligned}
        \|\mu_{k} - \hat{\mu_{k}}\|^2_2 \leq \frac{2}{N}\ln{\frac{2|Z|}{\delta}}
    \end{aligned}
\end{equation}
Combining the results, we have:
\begin{equation}
        \begin{aligned}
            \frac{m}{H}\Delta_{k, \hat{k}} +\Delta_{\omega, \hat{\omega}} = \frac{m}{H}W_2(\omega(\cdot|k), \omega(\cdot|\hat{k})) + W_2(\omega(\cdot|k), \hat{\omega}(\cdot|k))\leq \frac{2m}{H}\sqrt{|Z|} + \frac{2}{N}\ln{\frac{2|Z|}{\delta}}
        \end{aligned}
    \end{equation}
\end{proof}
\section{Environment description}
\label{envdes}

\begin{itemize}
    \item Platform: The agent is expected to reach the final goal while avoiding an enemy, or leaping over a gap. There are three parameterized actions (run, hop and leap) and each discrete action has one continuous parameter.
    \item Goal: The agent needs to find a way to avoid the goal keeper and shoot the ball into the gate. There are three parameterized actions: kick-to(x,y), shoot-goal-left(h), shoot-goal-right(h).
    \item Hard Goal: A more challenging version of the Goal environment where there are ten parameterized actions.
    \item Catch Point: The agent is expected to catch a goal point within limited trials. There are two parameterized actions: Move(d), catch(d).
    \item Hard Move (n= 4, 6, 8, 10): This is a set of environments where the agent needs to control $n$ actuators to reach a goal point. The number of parameterized actions is $2^n$.
\end{itemize}

\section{Network Structure}

{\bf We use the official code provided by HyAR\footnote{\url{https://github.com/TJU-DRL-LAB/self-supervised-rl/tree/main/RL_with_Action_Representation/HyAR}} to implement all the baselines on the 8 benchmarks.} The dynamics models in our proposed DLPA consists of three components: transition predictor $T_{\phi}$, continue predictor $p_{\phi}$ and reward predictor $R_{\phi}$, whose structures are shown in Table~\ref{netstructure} and the hyperparameters are shown in Table~\ref{hyperparam}. 

\begin{table*}[htb]
\label{netstructure}
\small
\centering
\begin{tabular}{c|c|c|c}
\toprule
\centering
 \textbf{Layer} & \textbf{Transition Predictor} & \textbf{Continue Predictor} & \textbf{Reward Predictor} \\\midrule 
 Fully Connected & $(inp\_dim, 64)$ & $(inp\_dim, 64)$ & $(inp\_dim, 64)$ \\
 Activation & ReLU & ReLU & ReLU\\
 Fully Connected & $(64, 64)$ & $(64, 64)$ & $(64, 64)$ \\
 Activation & ReLU & ReLU & ReLU  \\
 Fully Connected & $(64, state\_dim)$ & $(64, 2)$ & $(64, 1)$ \\
 Fully Connected & $(64, state\_dim)$ & $(64, 2)$ & $(64, 1)$ \\ \bottomrule
\end{tabular}
\caption{Network structures for all three predictors, $inp\_dim$ is the size of state space, discrete action space and continuous parameter space. Instead of outputting a deterministic value, our networks output parameters of a Gaussian distribution, which are mean and log standard deviation.}

\end{table*}

\begin{table*}[htb]
\small
\centering
\begin{tabular}{c|c}
\toprule
\centering
 \textbf{Hyperparameter} &  \textbf{Value}  \\\midrule 
 Discount factor($\gamma$) & 0.99  \\
 Horizon & 10, 8, 8, 5, 5, 5, 5, 5 \\
 Replay buffer size & $10^6$   \\
 Population size & 1000  \\
 Elite size & 400  \\
 CEM iteration & 6 \\
 Temperature & 0.5 \\
 Learning rate & 3e-4 \\
 Transition loss coefficient & 1 \\
 Reward loss coefficient & 0.5 \\
 Termination loss coefficient & 1 \\
 Batch size & 128 \\
 Steps per update & 1  \\ \bottomrule
\end{tabular}
\caption{DLPA hyperparameters. We list the most important hyperparameters during both training and evaluating. If there's only one value in the list, it means all environments use the same value, otherwise, it's in the order of Platform, Goal, Hard Goal, Catch Point, Hard Move (4), Hard Move (6), Hard Move (8), and Hard Move (10).}
\label{hyperparam}
\end{table*}

It is worth noting that we train two reward predictors each time in Hard Move and Catch Point environments. 
Conditioned on whether the prediction for termination is True or False, we train one reward prediction network to only predict the reward when the trajectory has not terminated, and one network for the case when the prediction from the continue predictor is True. 

\section{Complete Learning Curves}
\label{app:fullplot}
We provide the full timescale plot of the training performance comparison on the 8 PAMDP benchmarks in Fig.~\ref{fig:runall}. In general, the proposed method DLPA achieves significantly better sample efficiency and asymptotic performance than all the state-of-the-art PAMDP algorithms in most scenarios.

\begin{figure}[h]
\label{fig:runall}
    \centering
    \includegraphics[width=0.95\linewidth]{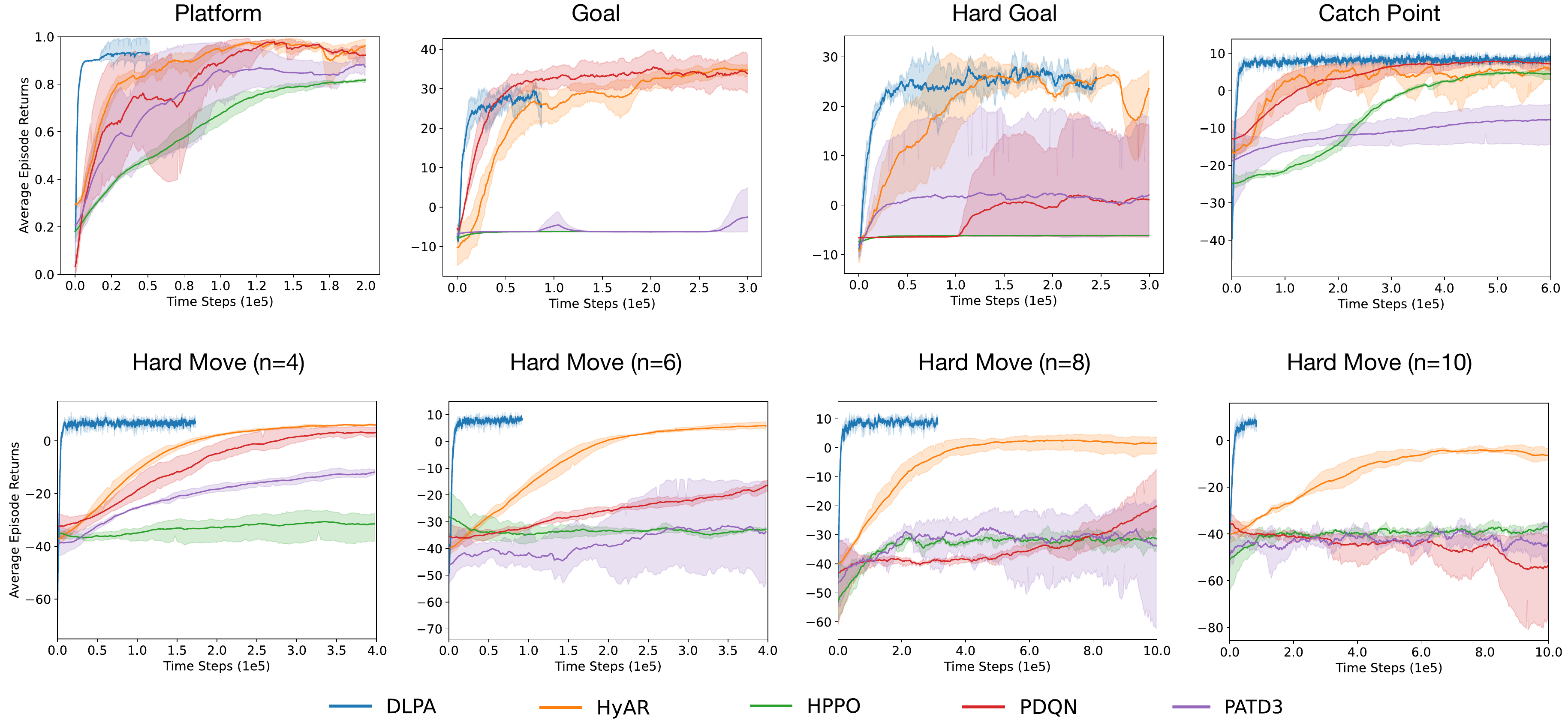}
    \caption{Comparison of different algorithms across the 8 PAMDP benchmarks, when the model-free methods converges. Our algorithm DLPA stops early in the experiments because it already converges.}
    \label{fig:runall}
\end{figure}

\begin{figure}[h]
\label{fig:separatereward}
    \centering
    \includegraphics[width=0.95\linewidth]{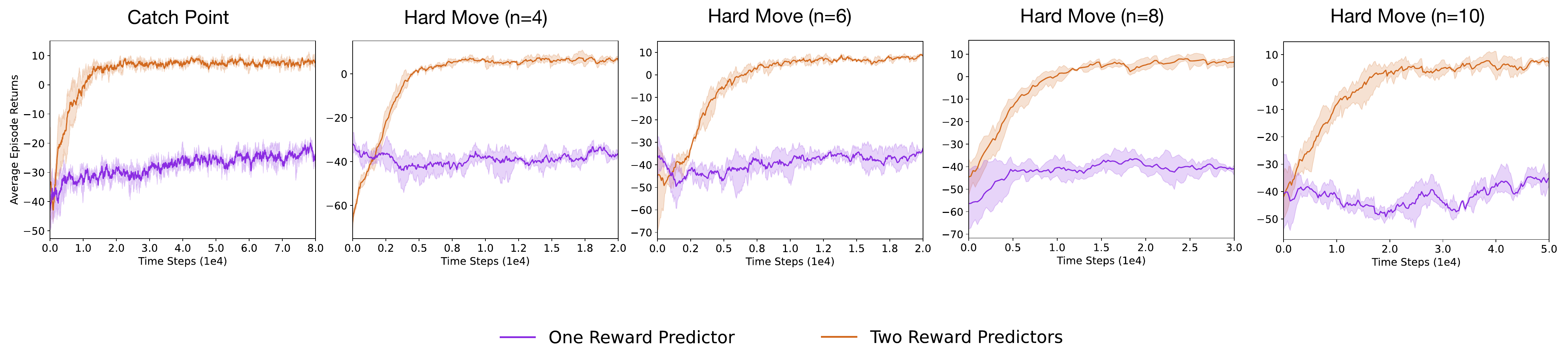}
    \caption{Comparison between using two separate reward predictors and one unified reward predictor across the 5 PAMDP domains.}
\end{figure}

\section{Additional ablation study}
Next we conduct an ablation study on the planning algorithm. In Section~\ref{planning}, we design a special sampling and updating algorithm for parameterized action spaces, here we compare it with a method that just randomly samples from a fixed distribution and picks the best action at every time step (also known as ``random shooting''). The results are shown in Figure~\ref{fig:ablrandoms}. The proposed method DLPA significantly outperforms the version of the algorithm that uses random shooting to do sampling and get the optimal actions. Parameterized action space in general is a much larger sampling space, comparing to just discrete or continuous action space. This is because each time the agent need to first sample the discrete actions and each discrete action has a independent continuous parameter space. The problem becomes more severe when the number of discrete actions is extremely large. Thus it is hard for a random-shooting method to consistently find the optimal action distributions while also augment exploration.

\label{Additionalablationstudy}
\begin{figure}[htbp]
\label{fig:ablrandoms}
\centering
    \includegraphics[width=0.3\linewidth]{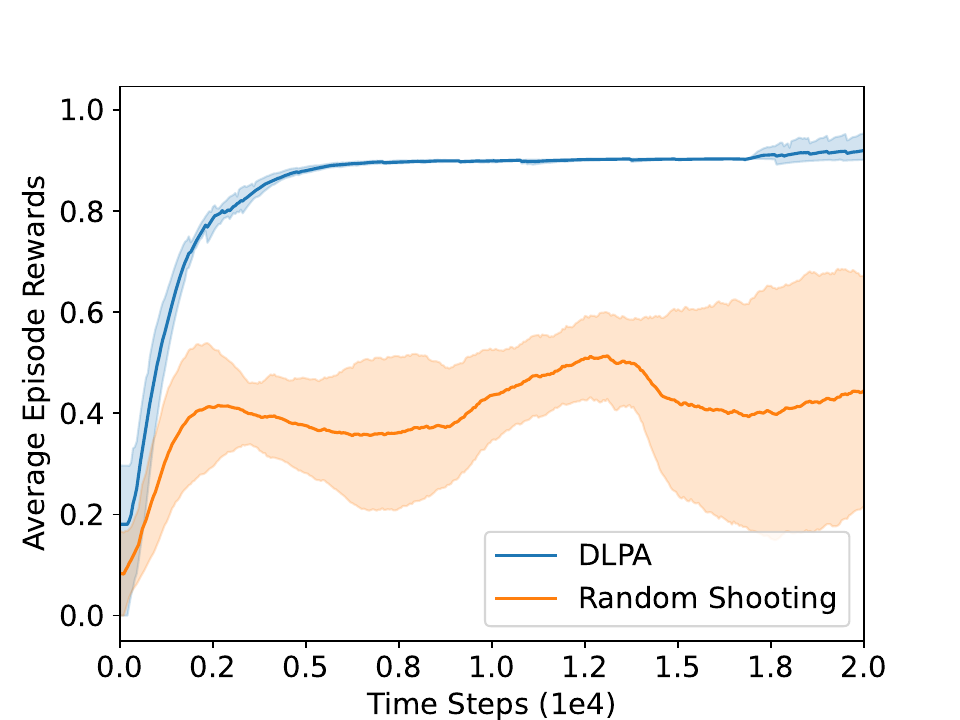} 
    \caption{Ablation study on the planning algorithm (Platform).}
\end{figure}

\newpage

\section{Computational complexity}
We also compare the clock time of planning and number of timesteps needed to converge as the action space expands. We tested this on the Hard Move domain, where the number of discrete actions changes from $2^4$ to $2^{10}$ (one continuous parameter for each of them). As shown in table~\ref{comput}, while the number of samples increases as the action space expands, it’s still within an acceptable range even when it’s extremely large. The results are also consistent with our theoretical analysis. Besides, table~\ref{tab:rebutcost} and table~\ref{tab:rebutcost2} show the computational cost with respect to different inference structures as we increase the dimensionality of the parameterized action space.

\begin{table}[b]
\label{comput}
\small
\centering
\begin{tabular}{c|c|c|c}
\toprule
\centering
 \textbf{\# Discrete actions} & \textbf{Planning clock time /s} & \textbf{Training clock time /s} & \textbf{\# Timesteps to converge} \\\midrule 
 $2^{4}$ & 1.71e-1 & 6.99e-3 & 6,000 \\
 $2^{6}$ & 3.05e-1 & 7.18e-3 & 8,500\\
 $2^{8}$ & 8.12e-1 & 7.53e-3 & 15,000 \\
 $2^{10}$ & 2.85 & 7.81e-3 & 23,000  \\ \bottomrule
\end{tabular}
\caption{Computational complexity study. We evaluate the number of timesteps needed to converge as the action space expands on the Hard Move domain.}
\end{table}

\begin{table*}[htbp]
\label{netstructure}
\small
\centering
\begin{tabular}{c|c|c|c|c}
\toprule
\centering
 \textbf{} & \textbf{$K = 2^4$} & \textbf{$K = 2^6$} & \textbf{$K = 2^8$} & \textbf{$K = 2^{10}$} \\\midrule 
 PADDPG & $2.54e-2$ & $2.62e-2$ & $2.71e-2$ & $3.40e-2$ \\
 DLPA - Sequential & $2.18e-1$ & $4.47e-1$ & $1.01$ & $2.18$ \\
 DLPA - Masking & $5.37e-1$ & $6.04e-1$ & $1.03$ & $2.95$ \\
 DLPA - Parallel & $1.71e-1$ & $3.05e-1$ & $8.12e-1$ & $2.85$ \\ \bottomrule
\end{tabular}
\caption{Planning clock time (seconds per step) of PADDPG and three different structure of DLPA methods in Hard Move domain, where the number of discrete actions $K$ changes from $2^4$ to $2^{10}$ (one continuous parameter for each of them).}
\label{tab:rebutcost}
\end{table*}

\begin{table*}[htbp]
\label{netstructure}
\small
\centering
\begin{tabular}{c|c|c|c|c}
\toprule
\centering
 \textbf{} & \textbf{$K = 2^4$} & \textbf{$K = 2^6$} & \textbf{$K = 2^8$} & \textbf{$K = 2^{10}$} \\\midrule 
 PADDPG & $1.13e-3$ & $1.28e-3$ & $1.39e-3$ & $1.57e-3$ \\
 DLPA - Sequential & $6.39e-3$ & $6.32e-3$ & $7.18e-3$ & $7.30e-3$ \\
 DLPA - Masking & $7.30e-3$ & $7.92e-3$ & $8.12e-3$ & $8.53e-3$ \\
 DLPA - Parallel & $6.99e-3$ & $7.18e-3$ & $7.53e-3$ & $7.81e-3$ \\ \bottomrule
\end{tabular}
\caption{Training clock time (seconds per step) of PADDPG and three different structure of DLPA methods in Hard Move domain, where the number of discrete actions $K$ changes from $2^4$ to $2^{10}$ (one continuous parameter for each of them).}
\label{tab:rebutcost2}
\end{table*}

\section{Sensitivity of Planning Horizon and Weights in H-step Prediction Loss}
Figure~\ref{fig:rebutsens} explores how the hyperparameters (i.e. Horizon and the weights of the H-step prediction loss) affect DLPA’s performance on “Platform” and “Goal” tasks.
\begin{figure*}[htbp]
    \centering
    \includegraphics[width=0.85\linewidth]{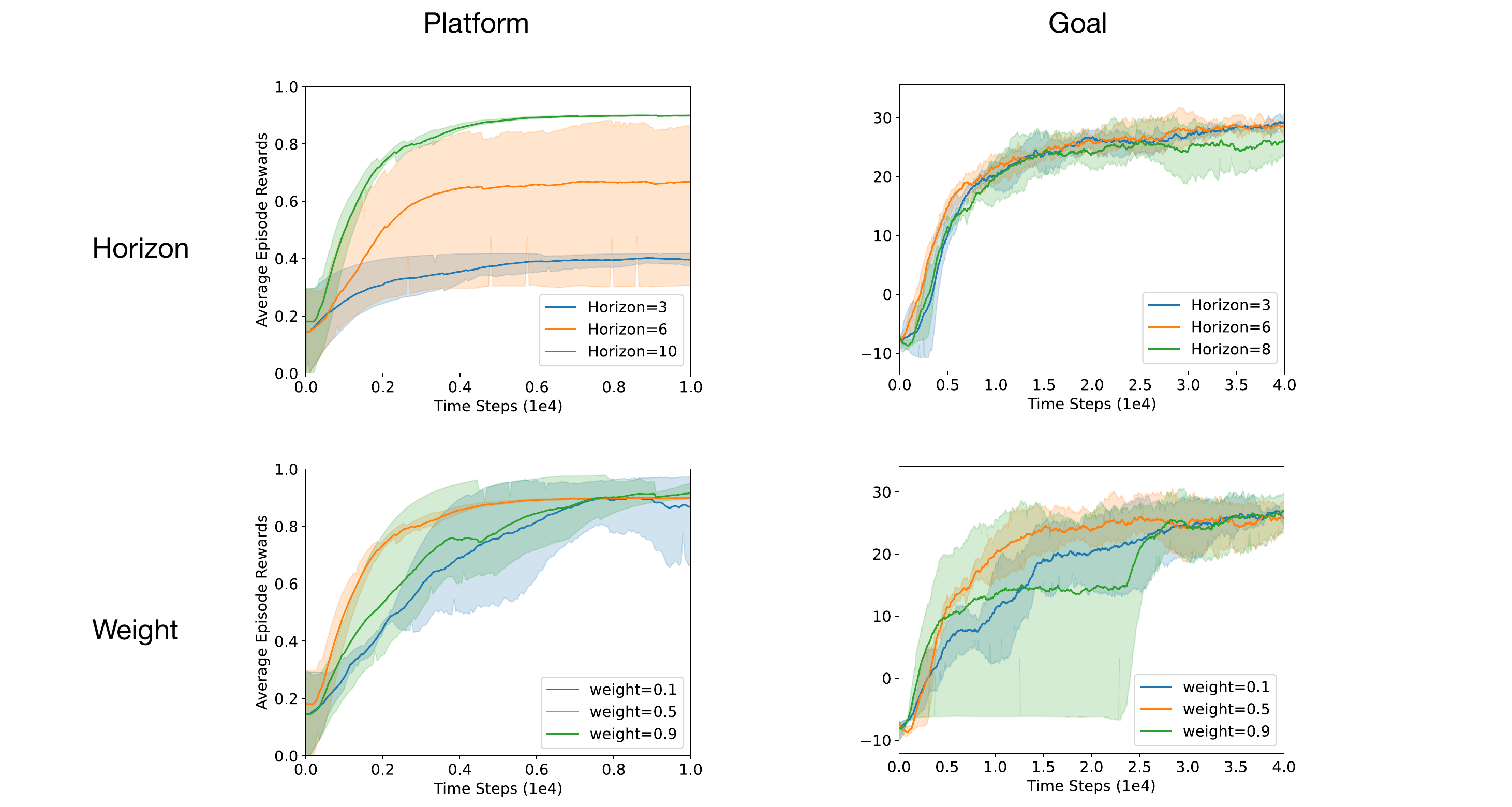}
    \caption{Sensitivity of Planning Horizon and Weight of H-step Prediction Loss}
    \label{fig:rebutsens}
\end{figure*}

\section{Additional Experiments on HFO}
\label{app:hfo}
We further test our method on a much more complex domain---Half-Field-Offense (HFO)~\citep{Stone2016HalfFO}, where both the state space and action space are much larger than the 8 benchmarks. Besides, the task horizon is 10$\times$ longer and there is more randomness existing in the environment. HFO is originally a subtask in RoboCup simulated soccer\footnote{\url{https://www.robocup.org/leagues/24}}. As shown in Figure~\ref{fig:hfo}, DLPA is still able to reach a better performance than all the model-free PAMDP RL baselines in this higher dimensional domain. 
\begin{figure}[htbp]
\label{fig:hfo}
    \centering
    \includegraphics[width=0.3\linewidth]{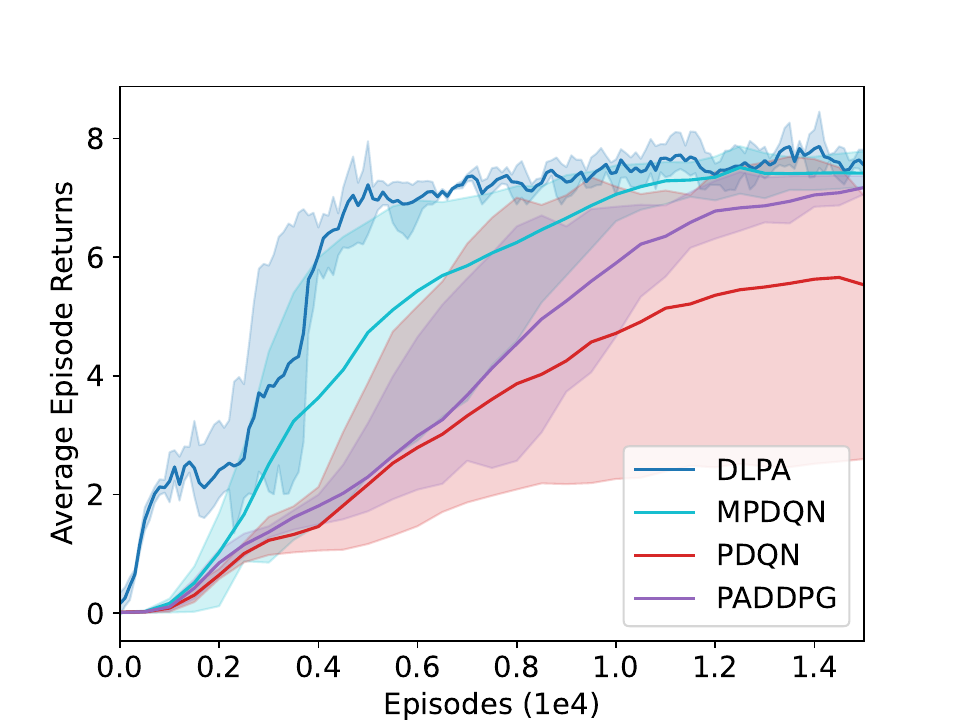}
    \caption{Additional experimental results on HFO}
    \label{fig:hfo}
\end{figure}


\end{document}